\def\1{\bm{1}}
\DeclareMathAlphabet{\mathsfit}{\encodingdefault}{\sfdefault}{m}{sl}
\SetMathAlphabet{\mathsfit}{bold}{\encodingdefault}{\sfdefault}{bx}{n}
\newcommand*{\E}{\mathbb{E}}
\definecolor{NavyBlue}{rgb}{0.1, 0.4, 0.8}
\renewcommand\cite{\citep}	
\newcommand{\TCR}[1]{\textcolor{red}{#1}}
\newcommand{\TCG}[1]{\textcolor{green}{#1}}
\newtheorem{theorem}{Theorem}
\newtheorem{proposition}{Proposition}
\newtheorem{lemma}{Lemma}
\newtheorem{definition}{Definition}
\DeclareMathOperator{\A}{\mathcal{A}}
\DeclareMathOperator{\M}{\mathcal{M}}
\DeclareMathOperator{\LL}{\mathcal{L}}
\newcommand{\thmstable}{If the loss function $\ell$ is $\rho-$Lipschitz, $\A(S^i)$ is close to $\A(S)$, the Hessian matrix $\nabla^2\mathcal{L}(\A(S))$ at $\A(S)$ is  positive-semidefinite with a singular value decomposition $U\operatorname{diag}(\Lambda) U^{-1}$, $\Lambda=\{\Lambda_1,\cdots,\Lambda_m\}$ and $\Lambda_{min}=\min\{\Lambda_1,\cdots,\Lambda_m\}$, then the expectation of the loss $\E_M L_R$ has a pointwise hypothesis stability as:
\begin{equation}
  \E_{S,i\sim U(n)}[|\ell(\A(S^{i}),z_i)-\ell(\A(S),z_i)|]\le \frac{2\rho^2 }{(\Lambda_{min}+2(1-p))n}.
\end{equation}}
\newcommand{\thmgeneralization}{We denote the generalization error as $R(\A,S)=\E_z \ell(\A(S),z)$ and the empirical error as $\hat{R}(\A,S)=\frac{1}{n}\sum_{i=1}^n \ell(\A(S),z)$. Then, for some constant $C$, we have with probability $1-\delta$,
\begin{equation}
  R(\A,S)\le \hat{R}(\A,S)+\sqrt{\frac{C^2+\frac{24C\rho^2}{\Lambda_{min}+2(1-p)}}{2n\delta}}.
\end{equation}
}
\newcommand{\thmtoreg}{Optimizing Problem (\ref{eqn:primal}) implies to optimizing the upper bound $\bar{L}$ of the following regularized problem:
\begin{equation}
  L_R= \min_\theta \mathcal{L}(\theta)+\|(I-M)(\theta-\theta^0)\|^2 \le \bar{L}.
\end{equation}
}
\newcommand{\thmapprox}{If $\hat{M}_{ii}=\mathds{1}(\sum_{j=1}^m\mathds{1}(|\frac{\nabla \LL(\theta^0)_i^2}{h_i}| > |\frac{\nabla \LL(\theta^0)_j^2}{h_j}|)\ge m-\lfloor mp\rfloor)$, where $\nabla \LL(\theta^0)_i$ is the $i$th element of the gradient vector $\nabla \LL(\theta^0)$, then
\begin{equation}
  \inf_{\Delta\theta} \LL(\theta^0+\hat{M}\Delta\theta)\le \inf_{\substack{\Delta\theta,\|M\|_0=\lfloor mp\rfloor;\\M_{ij}=0,\forall i\ne j; M_{ii}\in \{0,1\}}}  \LL(\theta^0+M\Delta\theta).
\end{equation}
}
\title{\textbf{On the Effectiveness of Parameter-Efficient Fine-Tuning}}
\author{
    \textbf{Zihao Fu,\textsuperscript{\rm 1} Haoran Yang,\textsuperscript{\rm 2}
    Anthony Man-Cho So,\textsuperscript{\rm 2}}\\
    \textbf{Wai Lam,\textsuperscript{\rm 2}
    Lidong Bing,\textsuperscript{\rm 3}
    Nigel Collier\textsuperscript{\rm 1}}
}
\date{\textsuperscript{\rm 1}Language Technology Lab, University of Cambridge,\\
\textsuperscript{\rm 2}The Chinese University of Hong Kong,
\textsuperscript{\rm 3}DAMO Academy, Alibaba Group\\
\{zf268,nhc30\}@cam.ac.uk,\\
    \{hryang,manchoso,wlam\}@se.cuhk.edu.hk,
    l.bing@alibaba-inc.com
}
\begin{document}

\maketitle

\begin{abstract}
  Fine-tuning pre-trained models has been ubiquitously proven to be effective in a wide range of NLP tasks. However, fine-tuning the whole model is parameter inefficient as it always yields an entirely new model for each task. Currently, many research works propose to only fine-tune a small portion of the parameters while keeping most of the parameters shared across different tasks. 
  These methods achieve surprisingly good performance and are shown to be more stable than their corresponding fully fine-tuned counterparts. However, such kind of methods is still not well understood. Some natural questions arise: How does the parameter sparsity lead to promising performance? Why is the model more stable than the fully fine-tuned models? How to choose the tunable parameters?
  In this paper, we first categorize the existing methods into random approaches, rule-based approaches, and projection-based approaches based on how they choose which parameters to tune. Then, we show that all of the methods are actually sparse fine-tuned models and conduct a novel theoretical analysis of them. We indicate that the sparsity is actually imposing a regularization on the original model by controlling the upper bound of the stability. Such stability leads to better generalization capability which has been empirically observed in a lot of recent research works. Despite the effectiveness of sparsity grounded by our theory, it still remains an open problem of how to choose the tunable parameters. Currently, the random and rule-based methods do not utilize task-specific data information while the projection-based approaches suffer from the projection discontinuity problem. To better choose the tunable parameters, we propose a novel Second-order Approximation Method (SAM) which approximates the original problem with an analytically solvable optimization function. The tunable parameters are determined by directly optimizing the approximation function. 
  We conduct extensive experiments on several tasks. The experimental results\footnote{The code is available at\ \ \ \   \url{https://github.com/fuzihaofzh/AnalyzeParameterEfficientFinetune}} show that our proposed SAM model outperforms many strong baseline models and it also verifies our theoretical analysis.

\end{abstract}

\section{Introduction}
Fine-tuning the model parameters for a specific task on a pre-trained model \cite{Peters2018DeepCW,kenton2019bert,lan2020albert,radford2018improving,radford2019language,liu2019roberta,brown2020language,lewis2020bart,raffel2020exploring} has become one of the most promising techniques for NLP in recent years. It achieves state-of-the-art performance on most of the NLP tasks. However, as the parameter number grows exponentially to billions \cite{brown2020language} or even trillions \cite{fedus2021switch}, it becomes very inefficient to save the fully fine-tuned parameters \cite{he2021towards} for each downstream task. Many recent research works propose a parameter-efficient \cite{houlsby2019parameter,zaken2021bitfit,he2021towards} way to solve this problem by tuning only a small part of the original parameters and storing the tuned parameters for each task.

Apart from the efficiency of the parameter-efficient models, it has also been observed in many recent research works that the parameter-efficient methods achieve surprisingly good performance. These models are more stable  \cite{he2021effectiveness,lee2019mixout,houlsby2019parameter,zaken2021bitfit,sung2021training,liu2021p,ding2022delta}  and even achieve better overall scores than the fully fine-tuned models \cite{lee2019mixout,houlsby2019parameter,zaken2021bitfit,sung2021training,liu2021p,xu2021raise,guo2021parameter,he2021towards,ding2022delta} on some tasks. Currently, it remains unclear why the parameter-efficient models can improve the stability and performance in many prevalent works. In this paper, we first categorize the existing methods into three categories (i.e. random approaches, rule-based approaches, and projection-based approaches) depending on how they choose the tunable parameters. Then, we define the generalized sparse fine-tuned model and illustrate that most of the existing parameter-efficient models are actually a sparse fine-tuned model. Afterwards, we introduce the widely used pointwise hypothesis stability of the sparse fine-tuned model and show theoretically that the sparsity actually controls the upper bound of the stability. Based on the stability analysis, we further give a theoretical analysis of the generalization bound for the sparse fine-tuned~model. 

Though promising results have been achieved by existing parameter-efficient models, it still remains a challenging problem to select suitable parameters as it is an NP-hard problem. Currently, the random \cite{lee2019mixout} and rule-based \cite{zaken2021bitfit,han2015deep,houlsby2019parameter,pfeiffer2020adapterhub} approaches propose to optimize fixed parameters. These methods are straightforward and easy to implement but they do not utilize task-specific data information. To solve this problem, the projection-based approaches \cite{mallya2018piggyback,guo2021parameter,xu2021raise} propose to calculate a score for each parameter based on the data and project the scores onto the parameter selection mask's feasible region (an $L_0$ ball). However, as the feasible region is non-convex, we will show that such projection suffers from the projection discontinuity problem which makes the parameter selection quite unstable. To solve these problems, we propose a novel Second-order Approximation Method (SAM) to approximate the NP-hard optimization target function with an analytically solvable function. Then, we directly choose the parameters based on the optimal value and optimize the parameters accordingly. We conduct extensive experiments to validate our theoretical analysis and our proposed SAM model.

Our contributions can be summarized as follows: 1) We propose a new categorization scheme for existing parameter-efficient methods and generalize most of these methods with a unified view called the sparse fine-tuned model. 2) We conduct a theoretical analysis of the parameter-efficient models' stability and generalization. 3) We propose a novel SAM model to choose the suitable parameters to optimize. 4) We conduct extensive experiments to verify our theoretical analysis and the SAM model.

\section{Unified View of Parameter Efficient Fine-tuning}
In this section, we first define the unified sparse fine-tuned model which is simpler and easier for theoretical analysis.  Then, we give a unified form of the optimization target. Afterwards, similar to previous works \cite{ding2022delta,he2021towards,mao2021unipelt}, we categorize these models into three categories based on how the parameters are chosen. Finally, we show that all the models are sparse fine-tuned model.

\subsection{Sparse Fine-tuned Model}
We first give the definition of sparse fine-tuned model as well as a unified optimization target. The equivalent model is also defined to help understand the models with modified structures. 
\begin{definition}[$p$-Sparse Fine-tuned Model]
  Given a pre-trained model $\M^0$ with parameters $\theta^0$, if a fine-tuned model $\M$ with parameters $\theta$ has the same structure as $\M^0$ such that $\|\theta-\theta^0\|_0\le p\dim(\theta),p\in(0,1)$, we say the model $\M$ is a $p$-sparse fine-tuned model with the sparsity $p$.
  \label{def:psparse} 
\end{definition}

Many previous works propose different methods of selecting proper parameters to fine-tune. We unify these methods by denoting $M$ as a mask matrix on the parameters and the parameter $\theta$ can be denoted as $\theta=\theta^0 +M\Delta \theta$, where $\Delta \theta$ is the difference vector. For a fixed sparsity coefficient $p$, the sparse fine-tuned model is trying to solve the following problem:

\begin{equation}
  \small
\begin{aligned}
  &\min_{\Delta \theta,M} \LL(\theta^0 +M\Delta \theta)\\
  s.t. \ \ \ \ \|M\|_0=&\lfloor mp\rfloor;\ \ \ \ M_{ij}=0,\forall i\ne j; \ \ \ \  M_{ii}\in \{0,1\},
\end{aligned}
\label{eqn:sfm}
\end{equation}

where $\lfloor \cdot\rfloor$ is the floor function, $m=\dim(\theta)$ is the parameter number, $M\in\{0,1\}^{m\times m}$ is the parameter mask matrix with the diagonal equal to 0 or 1 while other elements are equal to 0 and $\LL$ is the loss function. We will show that most of the existing methods are sparse fine-tuned models. However, in Definition \ref{def:psparse}, we assume that the fine-tuned model $\M$ has the same structure as $\M^0$. This assumption hinders us from analyzing many models that alter the structure including Adapter \cite{houlsby2019parameter,pfeiffer2020adapterhub,ruckle2021adapterdrop,he2021effectiveness}, LoRA \cite{hu2022lora}, and etc. We define the notion of equivalent model to solve this problem.

\begin{definition}[Equivalent Model] Given a pre-trained model $\M^0$ with parameters $\theta^0$, we say that a model $\tilde{\M^0}$ with parameters $\tilde{\theta}^0$ is an equivalent model for model $\M^0$ if $\forall x , \M^0(x) = \tilde{\M^0}(x)$.
\end{definition}

Here, we do not require that the equivalent model shares the same structure as the original model. As a result, for models fine-tuned with additional structures (e.g. Adapter and LoRA), we can still get a sparse fine-tuned model with respect to an equivalent model $\tilde{\M^0}$ instead of the original pre-trained model $\M^0$. Therefore, our analysis for the sparse fine-tuned model is also applicable to them.

\subsection{Parameter Efficient Fine-tuning as Sparse Fine-tuned Model}\label{sec:models}

\begin{figure}[t]
  \centering
  \centering
  \includegraphics[width=0.9\columnwidth]{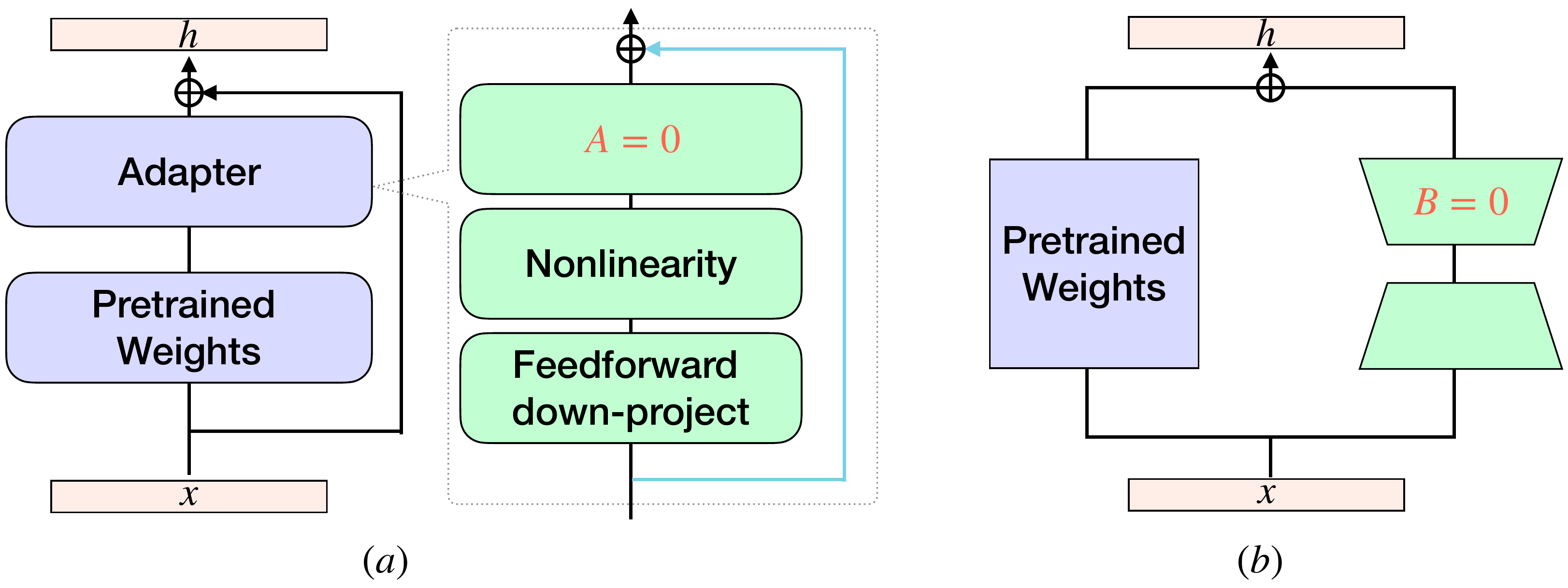}
  \caption{Equivalent model for Adapter (a)  and LoRA (b).}
  \label{fig:lora-and-adapter}
\end{figure}

Unfortunately, Problem (\ref{eqn:sfm}) is NP-Hard due to the nonconvexity of the feasible region of the matrix $M$. Many existing methods propose to solve this problem by first estimating $M$ and then optimizing other parameters. Based on different strategies for choosing $M$, the methods can be divided into three categories, namely, random approaches, rule-based approaches, and projection-based approaches.
We first give a general introduction of the prevalent parameter efficient fine-tuning methods in each category and then show that all of these methods are actually a sparse fine-tuned model. Then, in next section, we can prove our theory only based on properties in Definition \ref{def:psparse} without refering to any specific model's property.

\subsubsection{Random Approaches}
Random approaches include Random and Mixout models. These models randomly choose the parameters to be tuned. Such selection does not depend on the task-specific data information. Specifically,
\textbf{Random} model is very straightforward by randomly selecting the parameters with respect to a given sparsity ratio and then training the selected parameters. Therefore, according to Definition \ref{def:psparse}, it is a sparse fine-tuned model.
\textbf{Mixout} \cite{lee2019mixout} proposes to directly reset a portion of the fine-tuned model's parameters to the pre-trained parameters with respect to a ratio. Therefore, according to Definition \ref{def:psparse}, it is a sparse fine-tuned model.

\subsubsection{Rule-Based Approaches} 
The rule-based approaches include BitFit, MagPruning, Adapter, and LoRA. This kind of methods directly uses a pre-defined rule to fix the parameters to be tuned. It can be viewed as incorporating prior knowledge to recognize important features and can thus alleviate the problem of random approaches. However, the selection rules are still irrelevant to the specific data. Specifically,
\textbf{BitFit} \cite{zaken2021bitfit} only fine-tunes the bias-terms and achieves considerably good performance. Therefore, according to Definition \ref{def:psparse}, it is a sparse fine-tuned model with pre-defined tuning weights.
\textbf{MagPruning} \cite{han2015deep,han2015learning,lee2021layeradaptive,lagunas2021block} follows the idea that large weights are more important in the model. It ranks the weights by the absolute value and tunes the parameters with high absolute values. Therefore, according to Definition \ref{def:psparse}, it is a sparse fine-tuned model.
\textbf{Adapter} \cite{houlsby2019parameter,pfeiffer2020adapterhub,ruckle2021adapterdrop,he2021effectiveness,karimi2021compacter,kim2021revisiting,mahabadi2021parameter} proposes to add an adapter layer inside the transformer layer. Therefore, the model structure is different from the original model. To make it easier to analyze, Adapter can be viewed as fine-tuning an equivalent model shown in Fig. \ref{fig:lora-and-adapter} (a) which initializes the matrix $A$ as an all-zero matrix. The equivalent model has the same output as the original pre-trained model for arbitrary input while the structure is the same as the Adapter model. Therefore, fine-tuning the adapter model can be viewed as fine-tuning partial parameters of the equivalent model with the same structure. According to Definition \ref{def:psparse}, it is a sparse fine-tuned model with respect to the equivalent model.
\textbf{LoRA} \cite{hu2022lora,karimi2021compacter,panahi2021shapeshifter} proposes to add a new vector calculated by recovering an hidden vector from a lower dimension space. The model is illustrated in Fig. \ref{fig:lora-and-adapter} (b). It is interesting to notice that the original initialization makes the LoRA model already an equivalent model for the original pre-trained model as the matrix $B$ is set to 0. Therefore, according to Definition \ref{def:psparse}, fine-tuning a LoRA model can also be viewed as fine-tuning partial parameters of the equivalent model with the same structure.

\subsubsection{Projection-Based Approaches} 
To utilize the task-specific data to help select the model's tunable parameters, many researchers propose projection-based approaches including the DiffPruning, ChildPruning, and etc. These methods propose to choose the optimal parameter mask $M$ and optimize the parameters $\theta$ alternately to solve Problem (\ref{eqn:sfm}). Specifically, they first relax $M$ as a continuous variable to get an optimized value and then project the optimized value onto the feasible region which can be denoted as $\hat{M}=\Pi_\Omega (M)=\arg\min_{\hat{M}\in \Omega}\|\hat{M}-M\|$, where $\Omega=\{M|\|M\|_0=\lfloor mp\rfloor; M_{ij}=0,\forall i\ne j;   M_{ii}\in \{0,1\}\}$ and $\Pi_\Omega$ denotes the projection operator onto the feasible region $\Omega$ which is an $L_0$ ball. Specifically,
\textbf{DiffPruning} \cite{mallya2018piggyback,sanh2020movement,guo2021parameter,lagunas2021block} proposes to model the parameter selection mask as a Bernoulli random variable and optimize the variable with a reparametrization method. It then projects the mask onto $M$'s feasible region $\Omega$ and do the optimization alternately. Therefore, according to Definition \ref{def:psparse}, it is also a sparse fine-tuned model.
\textbf{ChildPruning} \cite{xu2021raise,mostafa2019parameter} proposes to iteratively train the full model parameters and then calculates the projected mask to find the child network. Therefore, it also agrees with the sparse fine-tuned model's definition.

\textbf{Projection Discontinuity Problem}. Though projection-based methods can utilize task-specific data information, such kind of methods suffers from the projection discontinuity problem. Specifically, the feasible region $\Omega$ (the $L_0$ ball) of $M$ is non-convex. Therefore, it does not have the non-expansion property which is generally guaranteed for projection onto a closed convex set. As a result, a small perturbation on $M$ can lead to a totally different projection. For example, as illustrated in Fig. \ref{fig:proj}, suppose that $p=0.5$ and $M_1=\operatorname{diag}\{0.99,1\},M_2=\operatorname{diag}\{1,0.99\}$. Though $M_1\approx M_2$, we have $\Pi_\Omega(M_1)=\operatorname{diag}\{0,1\}$ while $\Pi_\Omega(M_2)=\operatorname{diag}\{1,0\}$, which is quite different. Consequently, the projection is very sensitive to the parameters updating noise. As a result, it is hard to keep consistent with the previous parameters selection which leads to a big change for the parameters selection. Such inconsistency will impair the overall performance.

\begin{figure}[t]
  \centering
    \centering
    \includegraphics[width=0.3\columnwidth]{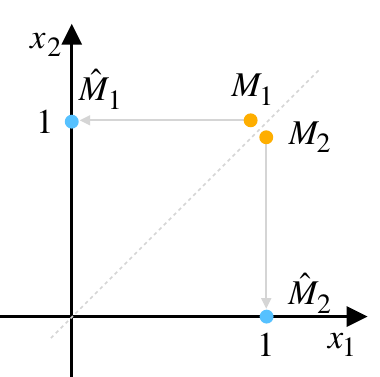}
  \caption{Projection discontinuity problem.}
  \label{fig:proj}
\end{figure}

\section{Theoretical Analysis of the Sparse Fine-tuned Model}
\label{sec:theory}
Suppose that we have a pre-trained model $\M^0$ with parameters $\theta^0$, we fine-tune the sparse fine-tuned model $\M$ by updating only $\lfloor pm\rfloor$ parameters. We will first show that sparsity implies a regularization of the original model. Then, we prove that if a model is a sparse fine-tuned model, the model stability can benefit from the sparsity. Next, we give a theoretical analysis of the model generalization error bound and show that sparsity contributes to reducing the generalization error. It should be noted that in the proofs, we only use properties from Definition \ref{def:psparse}. Therefore, our theory is applicable to all model categories (random approaches, rule-based approaches, and projection-based approaches) that agrees with Definition \ref{def:psparse}.

\subsection{Sparse Fine-tuned Model as a Regularizer}

As analyzed in section \ref{sec:models}, most of the models choose the parameter mask $M$ with different approaches and optimize the parameters $\theta$ accordingly. Here, we treat the matrix $M$ as a given parameter and denote $\theta=\theta^0 +M\Delta \theta$. The sparse fine-tuned optimization in Problem (\ref{eqn:sfm}) can be reformulated~as:

\begin{equation}
  \small
  \begin{aligned}
    &\min_\theta \mathcal{L}(\theta)\\
    s.t.\ \ \|(I&-M)(\theta-\theta^0)\|^2=0,
  \end{aligned}
  \label{eqn:primal}
\end{equation}
where $M=\operatorname{diag}\{M_{11},\cdots,M_{mm}\}$ is a diagonal matrix with $M_{ii}\in\{0,1\}$. By Lagrangian duality, solving Problem (\ref{eqn:primal}) is equivalent to solving the following problem:

\begin{equation}
  \small
  \bar{L}=\min_\theta \max_\lambda \mathcal{L}(\theta)+\lambda\|(I-M)(\theta-\theta^0)\|^2.
\end{equation}

Then, we derive a new regularized problem with the following proposition.

\begin{proposition}
  \small
  \thmtoreg
  
  \label{thm:toreg}
\end{proposition}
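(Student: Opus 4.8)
The plan is to recognize that the inequality $L_R \le \bar{L}$ is just the observation that the penalty coefficient $1$ is an admissible value of the dual variable $\lambda$ in the $\min_\theta\max_\lambda$ formulation, so replacing the inner $\max_\lambda$ by the single choice $\lambda=1$ can only decrease the objective. Concretely, I would first recall the exact‑penalty reformulation already set up in the excerpt: writing $h(\theta):=\|(I-M)(\theta-\theta^0)\|^2\ge 0$, solving the hard‑constrained Problem~(\ref{eqn:primal}) is the same as evaluating $\bar{L}=\min_\theta\max_\lambda\big(\mathcal{L}(\theta)+\lambda\,h(\theta)\big)$, where the inner maximization ranges over the admissible multipliers. (It does not matter whether one takes $\lambda\ge 0$ or $\lambda\in\mathbb{R}$: since $h\ge 0$, $\max_\lambda \lambda\,h(\theta)$ equals $0$ when $h(\theta)=0$ and $+\infty$ otherwise, i.e.\ it is exactly the $\{0,+\infty\}$‑indicator of the feasible set, which is why the $\min_\theta\max_\lambda$ problem reproduces Problem~(\ref{eqn:primal}).)

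Next comes the one‑line comparison. For every fixed $\theta$, because $1$ lies in the admissible range of $\lambda$,
\begin{equation}
  \mathcal{L}(\theta)+\|(I-M)(\theta-\theta^0)\|^2=\mathcal{L}(\theta)+1\cdot h(\theta)\le \mathcal{L}(\theta)+\max_\lambda \lambda\,h(\theta).
\end{equation}
I would then take the infimum over $\theta$ on both sides; this preserves the inequality (if $f\le g$ pointwise then $\min_\theta f\le\min_\theta g$, e.g.\ by evaluating $f$ along a minimizing sequence of $g$), giving
\begin{equation}
  L_R=\min_\theta\big(\mathcal{L}(\theta)+\|(I-M)(\theta-\theta^0)\|^2\big)\le \min_\theta\max_\lambda\big(\mathcal{L}(\theta)+\lambda\|(I-M)(\theta-\theta^0)\|^2\big)=\bar{L}.
\end{equation}
Since, by the duality reformulation recalled above, optimizing Problem~(\ref{eqn:primal}) amounts to computing $\bar{L}$, it follows that doing so also upper‑bounds — hence controls — the regularized objective $L_R$, which is precisely the claim of the proposition. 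This is also the natural place to note that the regularizer in $L_R$ is the squared norm $\|(I-M)(\theta-\theta^0)\|^2$, i.e.\ an $\ell_2$‑type penalty on the frozen coordinates, which is the form used later in the stability analysis.

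I do not expect a real obstacle here: the only care needed is bookkeeping — being explicit about the admissible domain of $\lambda$ so that the choice $\lambda=1$ is legitimate, and observing that nothing stronger than the exact‑penalty/weak‑duality identity is used (no strong duality or constraint qualification), because the decisive step is purely the pointwise domination of one feasible $\lambda$ by the supremum over all feasible $\lambda$. If one also wants the companion fact that $\bar{L}$ \emph{equals} the optimal value of the hard‑constrained problem (rather than merely bounding $L_R$), that is the routine $\max_\lambda \lambda\,h(\theta)\in\{0,+\infty\}$ computation, which I would record for completeness but which is not needed for the stated inequality.
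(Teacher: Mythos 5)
Your proof is correct and rests on the same key observation as the paper's: the penalized objective with coefficient $1$ is pointwise dominated by the Lagrangian supremum because $\lambda=1$ is an admissible multiplier, hence $L_R\le\bar{L}$ after minimizing over $\theta$. The paper reaches the same conclusion via the slightly longer chain $\min_\theta\max_\lambda\ge\max_\lambda\min_\theta\ge\min_\theta(\mathcal{L}(\theta)+\|(I-M)(\theta-\theta^0)\|^2)$, i.e.\ inserting a weak-duality step before choosing $\lambda=1$; your version simply skips that intermediate inequality.
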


The proof can be found in Appendix \ref{sec:A-toreg}. It can be concluded that optimizing Problem (\ref{eqn:primal}) is the same as optimizing the upper bound of the original loss function $\mathcal{L}(\theta)$ with a regularization term $\|(I-M)(\theta-\theta^0)\|^2$. We will show later that such regularization contributes to the stability of the sparse fine-tuned model.

\subsection{Stability Analysis}

Stability has been studied in a lot of previous research works \cite{bousquet2002stability,shalev2010learnability,shalev2014understanding,hardt2016train,kuzborskij2018data,charles2018stability,fu2021theoretical} in many different forms. We focus on one of the commonly used notions, namely, the Pointwise Hypothesis Stability (PHS) which focuses on analyzing the change of model output after a training sample is removed. Following \cite{charles2018stability}, we denote the original training data as $S=\{z_1,\cdots,z_n\}$ and the dataset without one sample as $S^i=S\backslash z_i=\{z_1,\cdots,z_{i-1},z_{i+1},\cdots,z_n\}$, where $z_i$ is the $i$th training sample. We also define $i\sim U(n)$ as a sampling procedure from a uniform distribution with $n$ samples. $\A(S)$ is defined as model parameters obtained by running algorithm $\A$ on data $S$.
\begin{definition}
  [Pointwise Hypothesis Stability, \cite{bousquet2002stability}] We say that a learning algorithm $\A$ has \textbf{pointwise hypothesis stability} $\epsilon$ with respect to a loss function $\ell$, if 
  \begin{equation}
    \small
    \E_{S,i\sim U(n)}[|\ell(\A(S^{i}),z_i)-\ell(\A(S),z_i)|]\le \epsilon.
  \end{equation}

  \label{def:stability}
\end{definition}
Here, $\ell(\theta,z_i)$ is the single sample loss for $z_i$ when the model parameter is $\theta$. We assume that $\A(S^i)$ is close to $\A(S)$. As $\A(S)$ is the optimal solution, the Hessian matrix at $\A(S)$ is a positive-semidefinite matrix. We can derive our bound for PHS in the following theorem.

\begin{theorem}[Stability] 
  \small
  \thmstable

  \label{thm:stable}
\end{theorem}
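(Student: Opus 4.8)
The plan is to follow the classical leave-one-out perturbation argument for pointwise hypothesis stability, adapting it to the sparse fine-tuned setting where the effective curvature is shifted by the regularization term from Proposition~\ref{thm:toreg}. Write $\theta_S = \A(S)$ and $\theta_{S^i} = \A(S^i)$, and recall from the regularized reformulation that $\A$ minimizes $\mathcal{L}(\theta) + \|(I-M)(\theta-\theta^0)\|^2$ (up to the upper bound $\bar L$). The key quantity to control is $\|\theta_{S^i} - \theta_S\|$: once we bound this by something of order $1/n$ times a curvature factor, the $\rho$-Lipschitzness of $\ell$ converts it into the stated bound on $\E[|\ell(\theta_{S^i},z_i) - \ell(\theta_S,z_i)|]$.

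**Key steps.**
First, I would write the empirical objective on $S$ as $\frac1n\sum_{j}\ell(\theta,z_j)$ plus the regularizer, and the objective on $S^i$ as $\frac{1}{n}\sum_{j\neq i}\ell(\theta,z_j)$ plus the regularizer (keeping the same normalization for bookkeeping), so that the two objectives differ only by the single term $\frac1n\ell(\theta,z_i)$. Second, I would Taylor-expand the regularized loss around $\theta_S$: since $\theta_S$ is a minimizer, the gradient vanishes there, and the Hessian of the regularized objective is $\nabla^2\mathcal{L}(\theta_S) + 2(I-M)$, whose smallest eigenvalue is at least $\Lambda_{\min} + 2(1-p)$ — here is where the $(1-p)$ enters, because $I-M$ has exactly $m - \lfloor mp\rfloor \approx (1-p)m$ ones on its diagonal, and the stated constant $2(1-p)$ is the per-coordinate strong-convexity contribution (with $p$ in the theorem statement playing the role of the sparsity, i.e. the proportion of tuned coordinates). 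Third, using strong convexity of the $S$-objective with parameter $\Lambda_{\min} + 2(1-p)$ together with the first-order optimality of $\theta_{S^i}$ for the $S^i$-objective, and the fact that the two objectives differ by $\frac1n\ell(\cdot,z_i)$ whose gradient has norm $\le \rho$, one gets the standard bound $\|\theta_{S^i} - \theta_S\| \le \frac{2\rho}{(\Lambda_{\min}+2(1-p))\,n}$. Fourth, apply $\rho$-Lipschitzness: $|\ell(\theta_{S^i},z_i) - \ell(\theta_S,z_i)| \le \rho\|\theta_{S^i}-\theta_S\| \le \frac{2\rho^2}{(\Lambda_{\min}+2(1-p))n}$, and take the expectation over $S$ and $i\sim U(n)$ to conclude, since the right-hand side is deterministic.

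**Main obstacle.**
The delicate point is the passage from the exact constrained problem~(\ref{eqn:primal}) to the strongly convex regularized surrogate with curvature $\Lambda_{\min} + 2(1-p)$: Proposition~\ref{thm:toreg} only gives an \emph{upper bound} $\bar L$ on the regularized objective, and the regularizer coefficient there is implicitly $1$ rather than a free multiplier, so I would need to argue carefully that the stability analysis can be carried out on this surrogate and that the Hessian lower bound $\nabla^2\mathcal{L}(\theta_S) \succeq 0$ (positive-semidefiniteness at the optimum, as assumed) combines additively with $2(I-M) \succeq 2(1-p)I$ in the relevant subspace. A secondary technical nuisance is that $\theta_S$ minimizes the $S$-objective only approximately (through the upper bound) and that "$\A(S^i)$ close to $\A(S)$" is an assumption rather than a derived fact — so the Taylor expansion's remainder must be absorbed, which is exactly what that closeness hypothesis is there to permit. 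Everything else is routine leave-one-out bookkeeping.
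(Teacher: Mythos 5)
Your overall architecture --- leave-one-out decomposition of the two empirical objectives differing by $\tfrac{1}{n}\ell(\cdot,z_i)$, a second-order Taylor expansion around the minimizer where the gradient vanishes, a curvature lower bound converted into $\|\A(S^i)-\A(S)\|\le \tfrac{2\rho}{(\Lambda_{\min}+2\lambda)n}$, and a final application of $\rho$-Lipschitzness --- is exactly the paper's Lemma~\ref{lemma:hps}. But the step where you make $(1-p)$ appear is wrong. You claim that the Hessian $\nabla^2\mathcal{L}(\theta_S)+2(I-M)$ has smallest eigenvalue at least $\Lambda_{\min}+2(1-p)$ because $I-M$ has a fraction $(1-p)$ of ones on its diagonal. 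That is not a valid eigenvalue bound: $I-M$ is a diagonal $0/1$ matrix whose smallest eigenvalue is $0$ (it vanishes on every tuned coordinate), so all you can conclude from Weyl's inequality is $\lambda_{\min}\bigl(\nabla^2\mathcal{L}+2(I-M)\bigr)\ge \Lambda_{\min}$, with equality attained whenever the minimal eigendirection of $\nabla^2\mathcal{L}$ lies in the tuned subspace. The fraction of nonzero diagonal entries does not enter the minimum eigenvalue; a "per-coordinate strong-convexity contribution" of $2(1-p)$ does not exist for a fixed mask.

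The paper closes this gap by changing the object of study, which is already signalled in the theorem statement ("the expectation of the loss $\E_M L_R$"): it treats $M_{ii}$ as random with $\E[(1-M_{ii})^2]=1-p$ and computes
\begin{equation*}
\E_M L_R \;=\; \mathcal{L}(\theta) + \E\sum_{i=1}^m (1-M_{ii})^2(\theta^0_i-\theta_i)^2 \;=\; \mathcal{L}(\theta) + (1-p)\,\|\theta-\theta^0\|^2 ,
\end{equation*}
i.e.\ the anisotropic, rank-deficient regularizer is averaged into a genuinely \emph{isotropic} ridge term with coefficient $\lambda=1-p$. Only then does the Hessian become $\nabla^2\mathcal{L}(\A(S))+2(1-p)I\succeq(\Lambda_{\min}+2(1-p))I$, and Lemma~\ref{lemma:hps} applies verbatim with $\lambda=1-p$. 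So your secondary worries (the surrogate being only an upper bound, the closeness hypothesis absorbing the Taylor remainder) are the right ones to flag and are handled the same way in the paper, but the missing idea is this expectation-over-the-mask step; without it the claimed denominator $\Lambda_{\min}+2(1-p)$ cannot be derived.
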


The proof can be found in Appendix \ref{sec:A-stable}. It can be observed from Theorem \ref{thm:stable} that as the sparsity parameter $p$ decreases, the upper bound also decreases. Therefore, sparse models imply better stability which explains most of the empirical results observed in many recent works \cite{he2021effectiveness,lee2019mixout,houlsby2019parameter,zaken2021bitfit,sung2021training,liu2021p,ding2022delta}. It should also be noted that if $p$ is small enough, the upper bound will not change significantly as $p$ continues to decrease. This is because in this case, the denominator is dominated by $\Lambda_{min}$ which is related to the landscape of the function. Empirically, if the sparsity is too small, the landscape will heavily depend on how the parameters are chosen and thus the stability is impaired.

\subsection{Generalization Analysis}
With the bound for the stability, we can then get the generalization error bound for the sparse fine-tuned model.

\begin{theorem}[Generalization] 
  \small
  \thmgeneralization
  
  \label{thm:generalization}
\end{theorem}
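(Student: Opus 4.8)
The plan is to derive the generalization bound from the pointwise hypothesis stability established in Theorem~\ref{thm:stable} via a standard stability-to-generalization argument (along the lines of \citet{bousquet2002stability} and \citet{charles2018stability}). First I would recall the elementary identity relating the expected generalization gap to a stability quantity: for an algorithm $\A$ and i.i.d.\ sample $S=\{z_1,\dots,z_n\}$,
\begin{equation}
  \E_{S}\!\left[R(\A,S)-\hat R(\A,S)\right]=\E_{S,z,i\sim U(n)}\!\left[\ell(\A(S^i),z)-\ell(\A(S),z_i)\right],
\end{equation}
which follows by renaming the held-out point and using that each $z_i$ is exchangeable with a fresh sample $z$. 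Combining this with Theorem~\ref{thm:stable} controls the \emph{expected} gap by $\epsilon := \tfrac{2\rho^2}{(\Lambda_{min}+2(1-p))n}$.

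Next I would upgrade the in-expectation statement to a high-probability statement. The standard route is a second-moment / Chebyshev argument: one shows that $\Var_S\!\left(R(\A,S)-\hat R(\A,S)\right)$ is itself controlled by the loss range $C$ (i.e.\ $|\ell|\le C$, absorbing the $\rho$-Lipschitz boundedness into a constant) and by the stability parameter $\epsilon$, yielding a bound of the form $\E_S[(R-\hat R)^2]\le C^2/n + \tfrac{12 C\rho^2}{(\Lambda_{min}+2(1-p))n}\cdot(\text{const})$, and then applies Chebyshev's inequality: with probability $1-\delta$,
\begin{equation}
  R(\A,S)\le \hat R(\A,S)+\sqrt{\frac{\E_S[(R(\A,S)-\hat R(\A,S))^2]}{\delta}}.
\end{equation}
Plugging in the second-moment bound and collecting the numerator as $C^2+\tfrac{24C\rho^2}{\Lambda_{min}+2(1-p)}$ over $2n\delta$ gives exactly the claimed inequality. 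The factor $2$ in the denominator and the constant $24$ come from tracking the combinatorial constants in the variance decomposition (splitting the double sum over $i,j$ into diagonal and off-diagonal terms and using stability on the off-diagonal terms twice).

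The main obstacle I expect is the variance bound: one must carefully expand $\E_S[(R-\hat R)^2]$, pair up the terms indexed by $(i,j)$, and for $i\ne j$ replace $S$ by the doubly-deleted set $S^{i,j}$ so that Theorem~\ref{thm:stable}'s PHS bound can be invoked on each coordinate change, incurring the $\rho$-Lipschitz factor each time; the $i=j$ terms and the boundary effects contribute the $C^2/n$ piece. This bookkeeping — and making sure the "$\A(S^i)$ close to $\A(S)$" hypothesis propagates to the two-point deletion — is where all the care is needed; the rest is Chebyshev plus algebraic tidying of constants.
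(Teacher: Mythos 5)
Your proposal is correct and follows essentially the same route as the paper: the paper's proof simply quotes Theorem 11 of Bousquet and Elisseeff (2002) as a black-box lemma (with probability $1-\delta$, $R(\A,S)\le \hat{R}(\A,S)+\sqrt{(C^2+12Cn\beta)/(2n\delta)}$ for any algorithm with pointwise hypothesis stability $\beta$) and substitutes $\beta = \tfrac{2\rho^2}{(\Lambda_{min}+2(1-p))n}$ from Theorem~\ref{thm:stable}, which yields the constant $24C\rho^2/(\Lambda_{min}+2(1-p))$ exactly as you compute. The only difference is that you propose to re-derive that lemma's second-moment-plus-Chebyshev argument inline rather than cite it, which is the same underlying proof.
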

The proof can be found in Appendix \ref{sec:A-generalization}. This result shows that the generalization error upper bound becomes smaller as the fine-tuned parameters become sparser. Intuitively, if a model is stable, a perturbation makes less effect on the model and the model is less likely to overfit.
It should be noted that the generalization error bound is determined by both the empirical error $\hat{R}(\A, S)$ and sparsity. Therefore, as the mask becomes sparser, even though the second term decreases, the training error $\hat{R}(\A, S)$ will possibly increase when the tunable parameters are not enough to fit the data. Consequently, as the sparsity decreases, the generalization error will first decrease and then increase. We will further examine this conjecture in experiments.

\section{Second-order Approximation Method}
In Section \ref{sec:theory}, we theoretically prove the effectiveness of sparsity in fine-tuning. However, it still remains a problem of how to choose the tunable parameters.
As discussed in Section \ref{sec:models}, the random and the rule-based approaches are robust to noise perturbation as the tunable parameters are fixed during training. However, these methods tune the same parameters on all kinds of tasks without utilizing the information from the task-specific data. On the other hand, the projection-based approaches solve this problem by getting full utilization of the data information but they suffer from the projection discontinuity problem. The noise in the parameter may change the selection of the parameters frequently, thus making the optimization procedure unstable. 

To solve the problems, we propose a novel Second-order Approximation Method (SAM), namely, utilizing the data information to help decide the parameter mask while avoiding the projection discontinuity problem. Instead of choosing the parameters randomly or simply by some rules, we propose a novel second-order approximation of Problem (\ref{eqn:sfm}) to make the optimization target analytically solvable. Then, we directly get the optimal solution for the parameter mask $M$ and fix the mask to train the other parameters $\theta$. Specifically, as indicated by \citet{radiya2020fine}, the fine-tuned parameters are close to the pre-trained parameters. We can approximate the loss function with its second-order Taylor expansion as $\mathcal{L}(\theta^0+M\Delta \theta)\approx \LL(\theta^0) + \nabla \LL(\theta^0)^{\mathrm T} M\Delta \theta + \frac{1}{2} (M\Delta \theta)^{\mathrm T} H M\Delta \theta.$
Unfortunately, the Hessian matrix $H$ is expensive to compute especially for a large neural model. To solve this problem, we adopt the widely used technique \cite{dembo1982inexact,ricotti1988learning,bishop2006pattern,bollapragada2019exact,xu2020newton,yao2021inexact,yao2021adahessian} of approximating the Hessian matrix with a diagonal matrix denoted as $H=\operatorname{diag}\{h_1,h_2,\cdots,h_n\}$. We also assume that $H$ is positive semidefinite as the pre-trained weights is close to the global minimizer \cite{radiya2020fine} in each downstream task. Then, Problem (\ref{eqn:sfm}) can be reformulated as:

\begin{equation}
  \small
  \begin{aligned}
    \min_{\Delta \theta} \LL(\theta^0) +& \nabla \LL(\theta^0)^{\mathrm T} M\Delta \theta + \frac{1}{2} (M\Delta \theta)^{\mathrm T} H M\Delta \theta\\
    s.t. \ \ \ \ \|M\|_0=\lfloor mp&\rfloor;\ \ \ \ M_{ij}=0,\forall i\ne j; \ \ \ \  M_{ii}\in \{0,1\}.
  \end{aligned}
  \label{eqn:approx}
\end{equation}

With the above setup, we can get the optimal parameter mask $M$ for Problem (\ref{eqn:approx}) based on the following theorem:

\begin{theorem}
  \small
  \thmapprox
  
  \label{thm:minapprox}
\end{theorem}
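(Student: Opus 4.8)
The plan is to exploit the fact that, because the Hessian surrogate $H=\operatorname{diag}\{h_1,\dots,h_m\}$ is diagonal, the quadratic objective of Problem~(\ref{eqn:approx}) decouples coordinate-wise once the mask $M$ is fixed, so that the inner minimization over $\Delta\theta$ can be done in closed form and the remaining combinatorial problem over $M$ collapses to a transparent top-$k$ selection. Throughout, I read $\LL(\theta^0+M\Delta\theta)$ as the second-order surrogate $\LL(\theta^0)+\nabla\LL(\theta^0)^{\mathrm T}M\Delta\theta+\tfrac12(M\Delta\theta)^{\mathrm T}HM\Delta\theta$ appearing in Problem~(\ref{eqn:approx}), and I abbreviate $g_i:=\nabla\LL(\theta^0)_i$ and $s_i:=g_i^2/h_i\ge 0$.

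First I would fix a feasible mask $M$, i.e.\ a diagonal $0/1$ matrix with support $S=\{i:M_{ii}=1\}$ of size $\lfloor mp\rfloor$, and minimize over $\Delta\theta$. Since $M$ annihilates every coordinate outside $S$, this is equivalent to the unconstrained minimization of $\sum_{i\in S}\bigl(g_i v_i+\tfrac12 h_i v_i^2\bigr)$ over $v\in\R^{|S|}$, which separates into one-dimensional convex quadratics (convexity is exactly where the standing assumption $H\succeq 0$ enters). Each such quadratic has minimizer $v_i=-g_i/h_i$ and optimal value $-\tfrac12 s_i$; when $h_i=0$ the coordinate contributes $0$ if $g_i=0$ and drives the infimum to $-\infty$ otherwise, which stays consistent if one reads $s_i=+\infty$ in that case. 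Hence
\[
\inf_{\Delta\theta}\LL(\theta^0+M\Delta\theta)=\LL(\theta^0)-\tfrac12\sum_{i\in S}s_i ,
\]
so the right-hand side of the theorem equals $\LL(\theta^0)-\tfrac12\max_{|S|=\lfloor mp\rfloor}\sum_{i\in S}s_i$, and the whole statement reduces to showing that the support of $\hat M$ attains this maximum.

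Next I would check that $\hat M$ is exactly the top-$\lfloor mp\rfloor$ selector for the scores $s_i$. Unpacking its definition, $\hat M_{ii}=1$ iff at least $m-\lfloor mp\rfloor$ indices $j$ satisfy $s_i>s_j$, equivalently iff at most $\lfloor mp\rfloor$ indices (counting $i$ itself) have score $\ge s_i$; when the $s_i$ are distinct this is precisely the set of the $\lfloor mp\rfloor$ largest scores, so $\sum_{i:\hat M_{ii}=1}s_i=\max_{|S|=\lfloor mp\rfloor}\sum_{i\in S}s_i$. Combining with the previous step gives $\inf_{\Delta\theta}\LL(\theta^0+\hat M\Delta\theta)=\LL(\theta^0)-\tfrac12\sum_{i:\hat M_{ii}=1}s_i\le\inf_{\Delta\theta}\LL(\theta^0+M\Delta\theta)$ for every feasible $M$, which is the assertion (with equality in the generic case).

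The main obstacle is not the core idea but the bookkeeping around the non-strict cases. One is $h_i=0$: the surrogate is then unbounded below unless $g_i=0$, so $\hat M$ must --- and, reading $s_i=+\infty$, does --- select such coordinates, keeping both sides of the inequality at $-\infty$. The other is ties among the $s_i$: the strict-inequality indicator in the definition of $\hat M$ can select slightly fewer than $\lfloor mp\rfloor$ coordinates when several scores coincide at the selection threshold, and one resolves this by breaking ties consistently so that $\hat M$ realizes a size-$\lfloor mp\rfloor$ set of largest scores (equivalently, since the infimum only decreases as the support grows, by letting the realized support be any such set containing all strictly-top coordinates) --- which is precisely why the theorem is stated as an inequality rather than an equality.
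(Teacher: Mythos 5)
Your proof is correct and follows essentially the same route as the paper's: fix the mask, use the diagonal structure of $H$ to decouple the surrogate coordinate-wise, minimize each scalar quadratic in closed form to get a per-coordinate gain of $-\nabla\LL(\theta^0)_i^2/(2h_i)$, and observe that $\hat{M}$ selects the coordinates with the largest such gains. The only difference is that you explicitly flag the degenerate cases ($h_i=0$ and ties at the selection threshold, where the strict-inequality indicator can select fewer than $\lfloor mp\rfloor$ coordinates and the stated inequality needs a consistent tie-breaking convention to hold) which the paper's proof passes over silently at the step ``by the definition of $\hat{M}$ and $\mathcal{I}$.''
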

The proof can be found in Appendix \ref{sec:A-minapprox}. It can be observed that selecting features according to Theorem \ref{thm:minapprox} achieves the minimal value of the approximation in Problem (\ref{eqn:approx}). The remaining problem is how to calculate the diagonal of the Hessian matrix. Unfortunately, calculating the diagonal Hessian is as complex as calculating the whole Hessian. To solve this problem, instead of minimizing the target function in Problem (\ref{eqn:approx}), we propose to optimize its upper bound

\begin{equation}
  \small
  \begin{aligned}
    \min_{\Delta \theta} \LL(\theta^0) +& \nabla \LL(\theta^0)^{\mathrm T} M\Delta \theta + \frac{1}{2} (M\Delta \theta)^{\mathrm T} D M\Delta \theta\\
    s.t. \ \ \ \ \|M\|_0=\lfloor mp&\rfloor;\ \ \ \ M_{ij}=0,\forall i\ne j; \ \ \ \  M_{ii}\in \{0,1\}.
  \end{aligned}
  \label{eqn:approxub}
\end{equation}

where $D=\text{diag}\{|\lambda_{max}|, |\lambda_{max}|, \cdots, |\lambda_{max}| \}$ and $\lambda_{max}$ is the maximal eigenvalue of $H$. This can be directly calculated from the Rayleigh quotient that $\forall x\ne 0, x^THx\le x^Tx\lambda_{max}\le x^Tx|\lambda_{max}|=x^TDx$. Therefore, the SAM algorithm is quite straightforward based on Theorem \ref{thm:minapprox}. We first get the gradient $\nabla \LL(\theta^0)_i$ for the $i$th parameter $\theta_i$. Then, we calculate $|\nabla \LL(\theta^0)_i^2|$ and take the top $\lfloor mp \rfloor$ parameters to optimize. We will not change the selected parameters during the optimization procedure.

\section{Experiments}  
\subsection{Experimental Setup}
Following most previous works \cite{phang2018sentence,lee2019mixout,dodge2020fine,xu2021raise}, we use the original development set as the test set to report the scores as the original test sets are only available via the leaderboard with a limited submission number. Different from many previous works that train models without validation, we split the original training set by randomly sampling 10\% as the new development set while using the remaining 90\% samples to train the model. Instead of training the model for fixed epoch number, we use the new development set to do an early stop training by setting the tolerance for all models to 40. We build our models with the jiant\footnote{\url{https://jiant.info/}} \cite{phang2020jiant} framework and test our models on several GLUE \cite{wang2018glue} and SuperGLUE \cite{wang2019superglue} tasks. Following the setting of \citet{lee2019mixout,xu2021raise}, we choose several tasks including Corpus of Linguistic Acceptability (CoLA) \cite{warstadt2019neural}, Semantic Textual Similarity Benchmark (STS-B) \cite{cer2017semeval}, Microsoft Research Paraphrase Corpus (MRPC) \cite{dolan2005automatically}, Recognizing Textual Entailment (RTE) \cite{dagan2005pascal,giampiccolo2007third,bentivogli2009fifth}, Commitment Bank (CB) \cite{de2019commitmentbank}, Choice of Plausible Alternatives (COPA) \cite{roemmele2011choice}, and Winograd Schema Challenge (WSC) \cite{levesque2012winograd}. We compare our model with many strong baseline models including Random, Mixout, BitFit, MagPruning, Adapter, LoRA, DiffPruning, and ChildPruning. The details of these models have been extensively discussed in Section \ref{sec:models} and we adopt the same evaluation methods as \citet{wang2018glue,wang2019superglue} to evaluate the models. We run each experiment 10 times with different random seeds and report the scores with corresponding standard deviations. As many previous experiments are conducted under different settings, we re-implement all the baseline models with the jiant framework to give a fair comparison. For the Adapter and LoRA model, we incorporate AdapterHub \footnote{\url{https://adapterhub.ml/}} \cite{pfeiffer2020adapterhub} and loralib \footnote{\url{https://github.com/microsoft/LoRA}} into jiant. Following the setting of \citet{guo2021parameter}, we set the sparsity to 0.005 for all models for a fair comparison. In  SAM, we calculate $\nabla\mathcal{L}(\theta^0)_i$ by accumulating the gradient for a few burn-in steps as we cannot load all the training data into memory, the burn-in steps are chosen from $\{500, 600, 700, 800, 900, 1000, 2000\}$ on the development set as a hyper-parameter. For CoLA and MRPC, we set burn-in step to 600; For STS-B, we set burn-in step to 1000; For RTE, CB, and COPA we set burn-in step to 700; For WSC, we set burn-in step to 2000. We fine-tune the models based on RoBERTa-base \cite{liu2019roberta} provided by transformers\footnote{\url{https://huggingface.co/docs/transformers/model_doc/roberta}} toolkit \cite{wolf-etal-2020-transformers} and we run the models on NVIDIA TITAN RTX GPU with 24GB memory.

\begin{table*}[t]
  \centering
  \scriptsize
  \setlength{\leftskip}{-40pt}
  \begin{tabular}{l@{~}|@{~}l@{~}@{~}l@{~}@{~}l@{~}@{~}l@{~}@{~}l@{~}@{~}l@{~}@{~}l@{~}|@{~}l}
  \toprule
  {} &                CoLA &              STS-B &                MRPC &                 RTE &                 CB &               COPA &                WSC &                 AVG \\
  \hline
  FullTuning   &                           58.36±1.74 &                           89.80±0.52 &  \textbf{89.55}$_{[1]}$±0.81 &                                    76.03±2.14 &           88.93$_{[2]}$±2.37$_{[2]}$ &                                    67.70±4.41 &                           53.10±6.18 &                                    74.78±2.60 \\
Random       &                   58.35±1.05$_{[2]}$ &          89.81±\textbf{0.11}$_{[1]}$ &                   88.73±0.80 &                                    72.71±3.23 &          \textbf{90.54}$_{[1]}$±3.39 &                                    68.80±2.64 &                           52.88±5.97 &                                    74.55±2.46 \\
MixOut       &                           58.66±1.96 &                   90.15$_{[3]}$±0.17 &           88.69±0.60$_{[3]}$ &  \textbf{77.55}$_{[1]}$±\textbf{1.64}$_{[1]}$ &                           86.51±4.13 &                                    71.30±4.84 &                           52.98±6.78 &                            75.12$_{[3]}$±2.88 \\
Bitfit       &                           56.67±1.45 &                   90.12±0.14$_{[3]}$ &           87.35±0.58$_{[2]}$ &                                    72.74±2.47 &                           86.96±3.20 &                                    71.20±3.79 &                           55.10±5.39 &                                    74.31±2.43 \\
MagPruning   &                           56.57±2.47 &           90.30$_{[2]}$±0.14$_{[3]}$ &                   88.09±0.79 &                            73.53±1.84$_{[3]}$ &                           81.25±3.50 &                    71.50$_{[3]}$±2.46$_{[2]}$ &          55.67±\textbf{2.73}$_{[1]}$ &                            73.85±1.99$_{[2]}$ \\
Adapter      &  \textbf{62.11}$_{[1]}$±1.22$_{[3]}$ &                   90.05±0.13$_{[2]}$ &   89.29$_{[3]}$±0.60$_{[3]}$ &                            76.93$_{[3]}$±2.05 &                           87.32±4.62 &                            69.50±2.54$_{[3]}$ &                   57.02$_{[2]}$±5.27 &                            76.03$_{[2]}$±2.35 \\
LoRA         &                   60.88$_{[3]}$±1.48 &                           87.19±0.51 &           89.53$_{[2]}$±0.62 &                            76.97$_{[2]}$±1.92 &                           84.64±3.76 &                                    69.70±2.83 &                   56.84$_{[3]}$±4.52 &                            75.11±2.24$_{[3]}$ \\
DiffPruning  &                           58.53±1.49 &                           89.59±0.34 &                   78.79±6.09 &                                    69.93±7.87 &                   86.25±2.65$_{[3]}$ &                            72.10$_{[2]}$±2.91 &                   53.37±3.60$_{[3]}$ &                                    72.65±3.57 \\
ChildPruning &                           60.00±1.29 &                           89.97±1.51 &                   87.19±3.86 &                                    75.76±4.38 &                           86.61±3.22 &                                    69.40±4.00 &                           55.59±3.81 &                                    74.93±3.15 \\
SAM          &  60.89$_{[2]}$±\textbf{0.96}$_{[1]}$ &  \textbf{90.59}$_{[1]}$±0.14$_{[3]}$ &  88.84±\textbf{0.49}$_{[1]}$ &                            76.79±1.72$_{[2]}$ &  88.93$_{[2]}$±\textbf{1.75}$_{[1]}$ &  \textbf{74.30}$_{[1]}$±\textbf{2.45}$_{[1]}$ &  \textbf{59.52}$_{[1]}$±3.08$_{[2]}$ &  \textbf{77.12}$_{[1]}$±\textbf{1.51}$_{[1]}$ \\
  \bottomrule
  \end{tabular}
  
\captionsetup{width=1.0\textwidth}  

\caption{Main experiment. We run each experiment 10 times with different random seeds and report means and standard deviations. The number in the bracket is the rank for the scores in the corresponding column. Due to the space limit, we attach the training time analysis and the significance test in Appendix \ref{sec:A-timeexp} and \ref{sec:A-ttest}.}

\label{tab:main}
\end{table*}

\subsection{Experimental Results}

\textbf{Main Experiment.} The main experimental results are illustrated in Table \ref{tab:main}. We can draw the following conclusions based on the results: (1) Most of the parameter-efficient models achieve better performance than the FullTuning model which is also consistent with the observations in many previous works. This observation supports our theoretical analysis in Theorem \ref{thm:generalization} that the parameter-efficient model has better generalization capability. (2) Most of the parameter-efficient models are more stable than the FullTuning model. This observation is also consistent with many empirical results in previous works and it also supports our theoretical stability analysis in Theorem \ref{thm:stable}. (3) It is interesting to note that even the Random model outperforms the FullTuning model. It shows that sparsity itself contributes to improving the performance. (4) Our proposed SAM model outperforms several baseline models in several tasks and it ranks in the top 3 of most tasks. This observation validates the effectiveness of our parameter selecting method discussed in Theorem \ref{thm:minapprox}. Due to the space limit, we attach the training time analysis and the significance test in Appendix \ref{sec:A-timeexp} and \ref{sec:A-ttest}.

\begin{figure}[t]
  \makebox[\linewidth][c]{
  \centering
  \begin{minipage}[t]{0.45\textwidth}
  \centering
  \includegraphics[width=0.99\columnwidth]{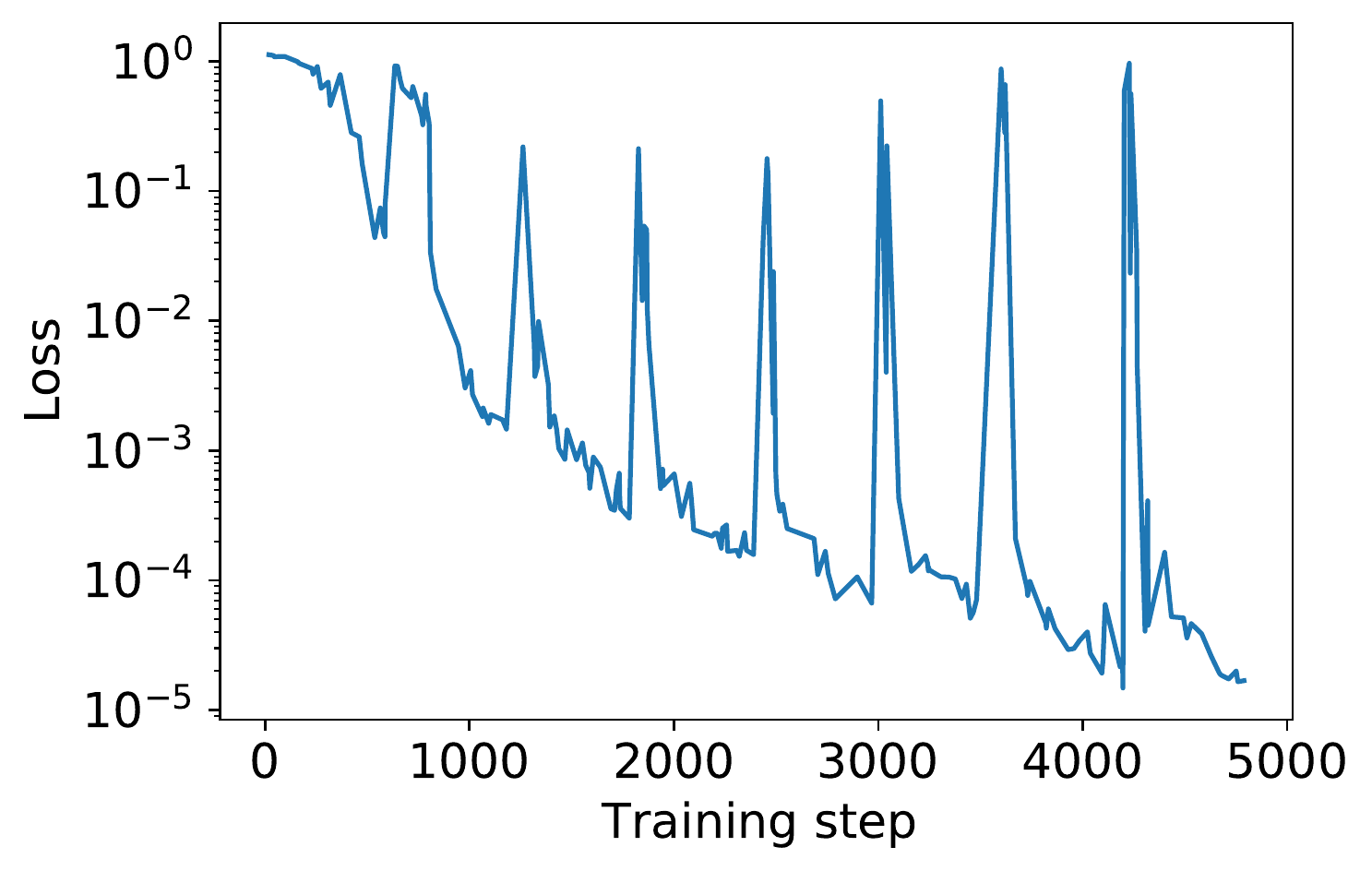}
  \caption{Projection discontinuity problem.}
  \label{fig:cb-sgpa-600}
  \end{minipage}
  \quad
  \begin{minipage}[t]{0.43\textwidth}
    \centering
    \includegraphics[width=0.95\columnwidth]{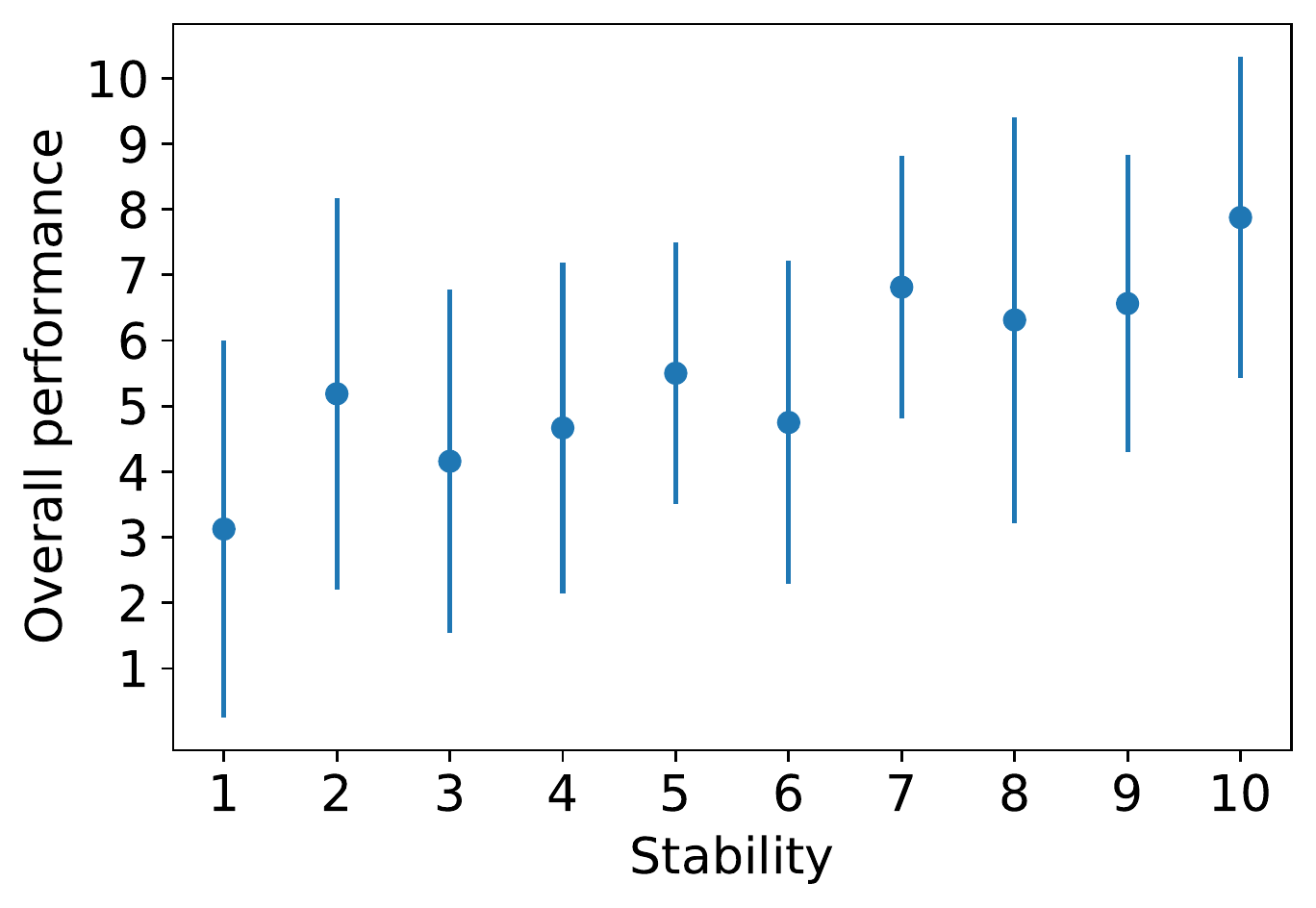}
  \caption{Relation between stability and overall Performance.}
  \label{fig:spearman}
  \end{minipage}
  }
\end{figure}

\begin{figure*}[t]
  \centering
  \includegraphics[width=1.\columnwidth]{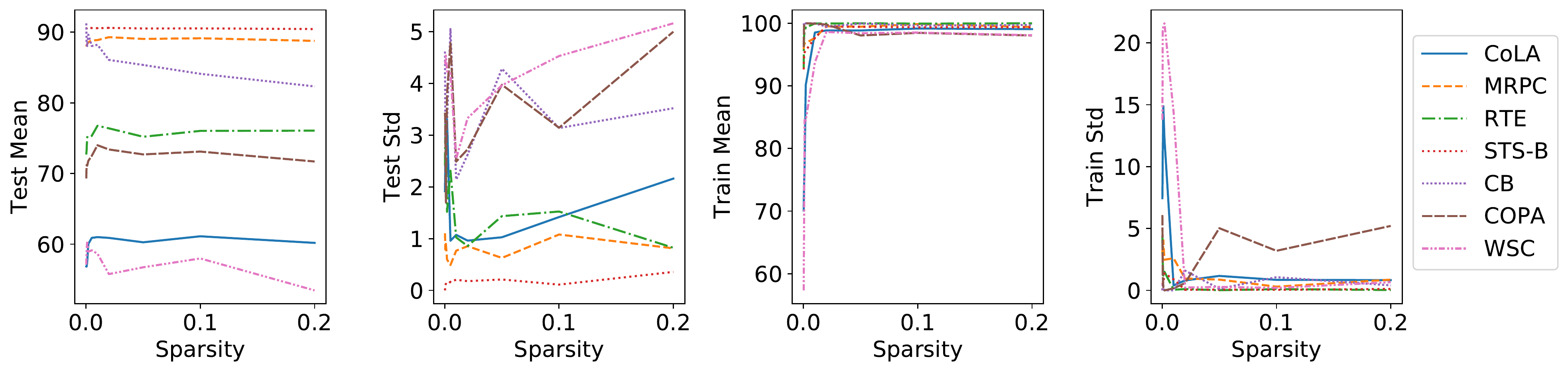}
\caption{Effectiveness of sparsity.}
\label{fig:sparsity}
\end{figure*}

\begin{table*}[t]
  \centering
  \scriptsize
  \setlength{\leftskip}{-40pt}
  \begin{tabular}{l@{~}|@{~}l@{~}@{~}l@{~}@{~}l@{~}@{~}l@{~}@{~}l@{~}@{~}l@{~}@{~}l@{~}|@{~}l}
  \toprule
  {} &               CoLA &               STS-B &                MRPC &                 RTE &                 CB &               COPA &                 WSC &                 AVG \\
  \hline
  FullTuning   &                   60.74$_{[2]}$±1.89 &                   90.11$_{[3]}$±0.26 &                            88.74$_{[3]}$±1.08 &                   75.37$_{[3]}$±1.93 &                                    84.29±4.21 &                   69.60±2.94 &                           54.81±7.51 &                                    74.81±2.83 \\
Random       &                           56.00±1.84 &                           89.79±0.20 &                            88.57±0.72$_{[2]}$ &                           73.00±2.01 &                            89.29$_{[2]}$±4.92 &           70.30±2.69$_{[3]}$ &                           56.87±4.29 &                                    74.83±2.38 \\
MixOut       &                   60.37$_{[3]}$±1.33 &           90.11$_{[3]}$±0.13$_{[3]}$ &                            88.50±0.78$_{[3]}$ &                   74.51±1.28$_{[2]}$ &                            83.75±3.14$_{[3]}$ &                   69.40±4.80 &                           57.88±6.15 &                            74.93$_{[3]}$±2.52 \\
Bitfit       &          55.26±\textbf{0.78}$_{[1]}$ &                           89.98±0.15 &                                    86.87±1.27 &                           71.36±1.71 &  \textbf{91.29}$_{[1]}$±\textbf{2.27}$_{[1]}$ &           71.80$_{[2]}$±3.92 &                           55.29±9.90 &                                    74.55±2.86 \\
MagPruning   &                           56.45±1.80 &  90.26$_{[2]}$±\textbf{0.11}$_{[1]}$ &                                    87.35±0.85 &                           72.24±2.14 &                                    84.46±3.58 &                   69.20±3.54 &  \textbf{59.71}$_{[1]}$±3.88$_{[2]}$ &                            74.24±2.27$_{[2]}$ \\
Adapter      &                           60.05±1.88 &                           89.92±0.19 &                   \textbf{88.79}$_{[1]}$±0.80 &                           74.55±1.80 &                                    86.61±4.97 &  68.80±\textbf{2.40}$_{[1]}$ &                           55.63±7.53 &                                    74.91±2.79 \\
LoRA         &  \textbf{61.46}$_{[1]}$±1.27$_{[3]}$ &                           86.73±0.38 &                                    88.28±1.06 &  \textbf{76.46}$_{[1]}$±1.34$_{[3]}$ &                            88.69$_{[3]}$±5.32 &           67.75±2.49$_{[2]}$ &           58.85$_{[3]}$±4.27$_{[3]}$ &                    75.46$_{[2]}$±2.30$_{[3]}$ \\
DiffPruning  &                           58.36±1.45 &                           89.52±0.27 &                                    77.46±5.31 &                           70.76±9.01 &                            85.18±2.65$_{[2]}$ &           70.40$_{[3]}$±3.07 &                           55.38±4.30 &                                    72.44±3.72 \\
ChildPruning &                           59.40±2.30 &                           89.33±3.23 &                                    88.43±0.80 &                           75.11±2.87 &                                    85.71±4.07 &                   70.30±4.54 &                           54.04±7.24 &                                    74.62±3.58 \\
SAM          &                   59.52±1.12$_{[2]}$ &  \textbf{90.45}$_{[1]}$±0.12$_{[2]}$ &  \textbf{88.79}$_{[1]}$±\textbf{0.69}$_{[1]}$ &  75.74$_{[2]}$±\textbf{1.27}$_{[1]}$ &                                    86.79±4.39 &  \textbf{74.00}$_{[1]}$±2.79 &  59.52$_{[2]}$±\textbf{3.32}$_{[1]}$ &  \textbf{76.40}$_{[1]}$±\textbf{1.96}$_{[1]}$ \\
  \bottomrule
  \end{tabular}

  \caption{Data perturbation stability. The setting is the same as the main experiments except that we run the experiments on different sampled datasets. Due to the space limit, we attach the significance test in Appendix \ref{sec:A-ttest}.}
  \label{tab:datastablity}
  \end{table*}

\textbf{Projection Discontinuity Problem.}
To give an intuitive illustration of the projection discontinuity problem in projection-based approaches, we plot the training curve of the DiffPruning method on the CB task. As illustrated in Fig. \ref{fig:cb-sgpa-600}, we adjust the mask every 600 training steps. It can be observed from the figure that each time we change the mask, the training error will go back to almost the same value as its initial loss. This result shows that changing the mask severely affects the training procedure due to the projection discontinuity problem.

\textbf{Relation between Stability and Overall Performance.} Theorem \ref{thm:generalization} shows that stability implies better generalization. To further validate this, we illustrate how the stability ranks and the overall performance ranks are correlated in the main experiment. As shown in Fig. \ref{fig:spearman}, the x-axis is the stability rank in each main experiment while the y-axis is the corresponding overall performance rank. For each vertical line of a specific stability rank, the dot indicates the overall performance mean rank value while the line length indicates the standard deviation. It can be observed from the figure that the two ranks are positively correlated indicating that stabler models usually have better generalization capability. To further show the relationship between the stability and the overall performance, we calculate Spearman's rank correlation coefficient \cite{spearman1904proof} for the two ranks. It can be denoted as $\rho=\frac{\operatorname{cov}(R(S),R(V))}{\sigma_{R(S)}\sigma_{R(V)}}$, where $R(S)$ and $R(V)$ are the rank variables, $\operatorname{cov}(R(S),R(V))$ is the covariance of $R(S)$ and $R(V)$ while $\sigma_{R(V)}$ is the standard deviation of the rank variable $V$. We have $\rho=0.4356$ with p-value$=0.000014<0.05$ indicating that the correlation between the two rank variables is significant. 

\textbf{Effectiveness of Sparsity.} To further verify our theoretical analysis in Theorem \ref{thm:stable} and Theorem \ref{thm:generalization}, we conduct a new experiment to show how the overall performance and the stability change as we change the sparsity. We change the sparsity of the SAM model in \{0.0002, 0.0005, 0.001, 0.002, 0.005, 0.01, 0.02, 0.05, 0.1, 0.2\} and plot the relationship between sparsity and the mean/standard deviation in both the test set and training set. The results are shown in Fig. \ref{fig:sparsity}. It can be concluded from the results that (1) as the sparsity ratio decreases, the mean and the standard deviation of most tasks also decrease which means the models become more stable with better generalization. This observation is consistent with our bound in Theorem \ref{thm:stable} and Theorem \ref{thm:generalization}. (2) If the sparsity ratio drops below a certain threshold, the models become quite unstable and the performance also sees a sharp drop. This is because the empirical error increases drastically which can be observed in the Train Mean and Train Std scores in Fig. \ref{fig:sparsity}. At the same time, under such circumstances, decreasing the sparsity ratio cannot further lower the bound effectively. Therefore, such observation is also consistent with our discussion in Theorem \ref{thm:stable} and Theorem \ref{thm:generalization}.

\textbf{Data Perturbation Stability.} In the main experiment, we use different random seeds. However, it is unknown whether the performance is still stable if we have a perturbation on the dataset. We conduct a new experiment to verify the data perturbation stability by training the model on 10 different training sets. Each of them is made by randomly removing 10\% training samples from our original training set. The results are shown in Table \ref{tab:datastablity}. It can be observed from the results that the data perturbation stability performance is similar to the main experiment and our proposed SAM model still has the best data perturbation stability as well as the overall performance among all the models.

\section{Related Works}

Fine-tuning on a pre-trained model \cite{Peters2018DeepCW,devlin2019bert,lan2020albert,radford2018improving,radford2019language,brown2020language,dong2019unified,qiu2020pre,chen2022revisiting,Liu2022FewShotPF} has shown to be very promising in recent years. However, fine-tuning the full model yields a large model with the same size for each task and many works indicate that fine-tuning the full model is unstable \cite{devlin2019bert,phang2018sentence,lee2019mixout,zhu2020freelb,dodge2020fine,pruksachatkun2020intermediate,mosbach2020stability,zhang2020revisiting,zhao2021calibrate}. To solve this problem, many researchers propose the parameter-efficient methods which only fine-tune a small part of the pre-trained parameters. These methods are found to be more stable than fine-tuning the full model \cite{he2021effectiveness,lee2019mixout,houlsby2019parameter,zaken2021bitfit,sung2021training,liu2021p}. Currently, there is still no previous work providing a theoretical analysis for the stability of the parameter-efficient models.

Depending on how the parameter-efficient models choose which parameters to optimize, we categorize them into 1) random approaches (Random and Mixout\cite{lee2019mixout}), 2) rule-based approaches (BitFit \cite{zaken2021bitfit}, MagPruning \cite{han2015deep,han2015learning,lee2021layeradaptive}, Adapter \cite{houlsby2019parameter,pfeiffer2020adapterhub,ruckle2021adapterdrop,he2021effectiveness}, LoRA \cite{hu2022lora}), and 3) projection-based approaches (DiffPruning \cite{mallya2018piggyback,sanh2020movement,guo2021parameter}, ChildPruning \cite{xu2021raise}). We refer readers to section \ref{sec:models} for more detailed discussion about these models. Despite the promising results achieved by these models, the random and rule-based approaches do not utilize the information from task-specific data while the projection-based approaches have the projection discontinuity problem.

Apart from the parameter efficient fine-tuning methods, many other approaches \cite{xuhong2018explicit,jiang2020smart,hua2021noise} have also been proposed to regularize the parameters to enhance the generalization capability. Moreover, many research works \cite{salman2020adversarially,jiang2020smart,li2021improved,hua2021noise} propose to train model adversarially while some researchers \cite{you2019drawing,liang2021finding,ansell2021composable,chen2021earlybert} propose to utilize the lottery ticket approaches to prune the network. Besides, the prompt-tuning \cite{liu2021p,lester2021power,li2021prefix} methods try to use prefix to adapt the model into new domains without changing the model parameters and the continuous prompts method \cite{li2021prefix} can be categorized into the rule-based approaches. Currently, our work focuses on approaches that only fine-tune a small part of the model which is very different from these models in structure or procedure.

\section{Conclusions}
In this paper, we propose to understand the effectiveness of the parameter-efficient fine-tuning models. Depending on how the tunable parameters are chosen, we first categorize most of the models into three categories, namely, random approaches, rule-based approaches, and projection-based approaches. Then, we show that all models in the three categories are sparse fine-tuned models and we give a theoretical analysis of the stability and the generalization error. We further show that the random approaches and the rule-based methods do not utilize the task data information while the projection-based approaches suffer from the projection discontinuity problem. We propose a novel SAM model to alleviate both problems and we conduct extensive experiments to show the correctness of our theoretical analysis and the effectiveness of our proposed models.

\section*{Acknowledgments}
The authors gratefully acknowledge the support of the funding from UKRI under project code ES/T012277/1.

\bibliographystyle{acl_natbib}
\bibliography{reference}

\begin{thebibliography}{88}
\expandafter\ifx\csname natexlab\endcsname\relax\def\natexlab#1{#1}\fi

\bibitem[{Ansell et~al.(2021)Ansell, Ponti, Korhonen, and
  Vuli{\'c}}]{ansell2021composable}
Alan Ansell, Edoardo~Maria Ponti, Anna Korhonen, and Ivan Vuli{\'c}. 2021.
\newblock Composable sparse fine-tuning for cross-lingual transfer.
\newblock \emph{arXiv preprint arXiv:2110.07560}.

\bibitem[{Bentivogli et~al.(2009)Bentivogli, Clark, Dagan, and
  Giampiccolo}]{bentivogli2009fifth}
Luisa Bentivogli, Peter Clark, Ido Dagan, and Danilo Giampiccolo. 2009.
\newblock The fifth pascal recognizing textual entailment challenge.
\newblock In \emph{TAC}.

\bibitem[{Bishop and Nasrabadi(2006)}]{bishop2006pattern}
Christopher~M Bishop and Nasser~M Nasrabadi. 2006.
\newblock \emph{Pattern recognition and machine learning}, volume~4.
\newblock Springer.

\bibitem[{Bollapragada et~al.(2019)Bollapragada, Byrd, and
  Nocedal}]{bollapragada2019exact}
Raghu Bollapragada, Richard~H Byrd, and Jorge Nocedal. 2019.
\newblock Exact and inexact subsampled newton methods for optimization.
\newblock \emph{IMA Journal of Numerical Analysis}, 39(2):545--578.

\bibitem[{Bousquet and Elisseeff(2002)}]{bousquet2002stability}
Olivier Bousquet and Andr{\'e} Elisseeff. 2002.
\newblock Stability and generalization.
\newblock \emph{The Journal of Machine Learning Research}, 2:499--526.

\bibitem[{Brown et~al.(2020)Brown, Mann, Ryder, Subbiah, Kaplan, Dhariwal,
  Neelakantan, Shyam, Sastry, Askell et~al.}]{brown2020language}
Tom Brown, Benjamin Mann, Nick Ryder, Melanie Subbiah, Jared~D Kaplan, Prafulla
  Dhariwal, Arvind Neelakantan, Pranav Shyam, Girish Sastry, Amanda Askell,
  et~al. 2020.
\newblock Language models are few-shot learners.
\newblock \emph{Advances in neural information processing systems},
  33:1877--1901.

\bibitem[{Cer et~al.(2017)Cer, Diab, Agirre, Lopez-Gazpio, and
  Specia}]{cer2017semeval}
Daniel Cer, Mona Diab, Eneko Agirre, Inigo Lopez-Gazpio, and Lucia Specia.
  2017.
\newblock Semeval-2017 task 1: Semantic textual similarity-multilingual and
  cross-lingual focused evaluation.
\newblock \emph{arXiv preprint arXiv:1708.00055}.

\bibitem[{Charles and Papailiopoulos(2018)}]{charles2018stability}
Zachary Charles and Dimitris Papailiopoulos. 2018.
\newblock Stability and generalization of learning algorithms that converge to
  global optima.
\newblock In \emph{International conference on machine learning}, pages
  745--754. PMLR.

\bibitem[{Chen et~al.(2022)Chen, Liu, Meng, and Liang}]{chen2022revisiting}
Guanzheng Chen, Fangyu Liu, Zaiqiao Meng, and Shangsong Liang. 2022.
\newblock Revisiting parameter-efficient tuning: Are we really there yet?
\newblock \emph{arXiv preprint arXiv:2202.07962}.

\bibitem[{Chen et~al.(2021)Chen, Cheng, Wang, Gan, Wang, and
  Liu}]{chen2021earlybert}
Xiaohan Chen, Yu~Cheng, Shuohang Wang, Zhe Gan, Zhangyang Wang, and Jingjing
  Liu. 2021.
\newblock Earlybert: Efficient bert training via early-bird lottery tickets.
\newblock In \emph{Proceedings of the 59th Annual Meeting of the Association
  for Computational Linguistics and the 11th International Joint Conference on
  Natural Language Processing (Volume 1: Long Papers)}, pages 2195--2207.

\bibitem[{Dagan et~al.(2005)Dagan, Glickman, and Magnini}]{dagan2005pascal}
Ido Dagan, Oren Glickman, and Bernardo Magnini. 2005.
\newblock The pascal recognising textual entailment challenge.
\newblock In \emph{Machine Learning Challenges Workshop}, pages 177--190.
  Springer.

\bibitem[{De~Marneffe et~al.(2019)De~Marneffe, Simons, and
  Tonhauser}]{de2019commitmentbank}
Marie-Catherine De~Marneffe, Mandy Simons, and Judith Tonhauser. 2019.
\newblock The commitmentbank: Investigating projection in naturally occurring
  discourse.
\newblock In \emph{proceedings of Sinn und Bedeutung}, volume~23, pages
  107--124.

\bibitem[{Dembo et~al.(1982)Dembo, Eisenstat, and Steihaug}]{dembo1982inexact}
Ron~S Dembo, Stanley~C Eisenstat, and Trond Steihaug. 1982.
\newblock Inexact newton methods.
\newblock \emph{SIAM Journal on Numerical analysis}, 19(2):400--408.

\bibitem[{Devlin et~al.(2019)Devlin, Chang, Lee, and
  Toutanova}]{devlin2019bert}
Jacob Devlin, Ming-Wei Chang, Kenton Lee, and Kristina Toutanova. 2019.
\newblock Bert: Pre-training of deep bidirectional transformers for language
  understanding.
\newblock In \emph{Proceedings of NAACL-HLT}, pages 4171--4186.

\bibitem[{Ding et~al.(2022)Ding, Qin, Yang, Wei, Yang, Su, Hu, Chen, Chan, Chen
  et~al.}]{ding2022delta}
Ning Ding, Yujia Qin, Guang Yang, Fuchao Wei, Zonghan Yang, Yusheng Su,
  Shengding Hu, Yulin Chen, Chi-Min Chan, Weize Chen, et~al. 2022.
\newblock Delta tuning: A comprehensive study of parameter efficient methods
  for pre-trained language models.
\newblock \emph{arXiv preprint arXiv:2203.06904}.

\bibitem[{Dodge et~al.(2020)Dodge, Ilharco, Schwartz, Farhadi, Hajishirzi, and
  Smith}]{dodge2020fine}
Jesse Dodge, Gabriel Ilharco, Roy Schwartz, Ali Farhadi, Hannaneh Hajishirzi,
  and Noah~A Smith. 2020.
\newblock Fine-tuning pretrained language models: Weight initializations, data
  orders, and early stopping.

\bibitem[{Dolan and Brockett(2005)}]{dolan2005automatically}
Bill Dolan and Chris Brockett. 2005.
\newblock Automatically constructing a corpus of sentential paraphrases.
\newblock In \emph{Third International Workshop on Paraphrasing (IWP2005)}.

\bibitem[{Dong et~al.(2019)Dong, Yang, Wang, Wei, Liu, Wang, Gao, Zhou, and
  Hon}]{dong2019unified}
Li~Dong, Nan Yang, Wenhui Wang, Furu Wei, Xiaodong Liu, Yu~Wang, Jianfeng Gao,
  Ming Zhou, and Hsiao-Wuen Hon. 2019.
\newblock Unified language model pre-training for natural language
  understanding and generation.
\newblock \emph{Advances in Neural Information Processing Systems}, 32.

\bibitem[{Elisseeff et~al.(2005)Elisseeff, Evgeniou, Pontil, and
  Kaelbing}]{elisseeff2005stability}
Andre Elisseeff, Theodoros Evgeniou, Massimiliano Pontil, and Leslie~Pack
  Kaelbing. 2005.
\newblock Stability of randomized learning algorithms.
\newblock \emph{Journal of Machine Learning Research}, 6(1).

\bibitem[{Fedus et~al.(2021)Fedus, Zoph, and Shazeer}]{fedus2021switch}
William Fedus, Barret Zoph, and Noam Shazeer. 2021.
\newblock Switch transformers: Scaling to trillion parameter models with simple
  and efficient sparsity.
\newblock \emph{arXiv preprint arXiv:2101.03961}.

\bibitem[{Fu et~al.(2021)Fu, Lam, So, and Shi}]{fu2021theoretical}
Zihao Fu, Wai Lam, Anthony Man-Cho So, and Bei Shi. 2021.
\newblock A theoretical analysis of the repetition problem in text generation.
\newblock In \emph{Proceedings of the AAAI Conference on Artificial
  Intelligence}, volume~35, pages 12848--12856.

\bibitem[{Giampiccolo et~al.(2007)Giampiccolo, Magnini, Dagan, and
  Dolan}]{giampiccolo2007third}
Danilo Giampiccolo, Bernardo Magnini, Ido Dagan, and William~B Dolan. 2007.
\newblock The third pascal recognizing textual entailment challenge.
\newblock In \emph{Proceedings of the ACL-PASCAL workshop on textual entailment
  and paraphrasing}, pages 1--9.

\bibitem[{Guo et~al.(2021)Guo, Rush, and Kim}]{guo2021parameter}
Demi Guo, Alexander~M Rush, and Yoon Kim. 2021.
\newblock Parameter-efficient transfer learning with diff pruning.
\newblock In \emph{Proceedings of the 59th Annual Meeting of the Association
  for Computational Linguistics and the 11th International Joint Conference on
  Natural Language Processing (Volume 1: Long Papers)}, pages 4884--4896.

\bibitem[{Han et~al.(2015{\natexlab{a}})Han, Mao, and Dally}]{han2015deep}
Song Han, Huizi Mao, and William~J Dally. 2015{\natexlab{a}}.
\newblock Deep compression: Compressing deep neural networks with pruning,
  trained quantization and huffman coding.
\newblock \emph{arXiv preprint arXiv:1510.00149}.

\bibitem[{Han et~al.(2015{\natexlab{b}})Han, Pool, Tran, and
  Dally}]{han2015learning}
Song Han, Jeff Pool, John Tran, and William Dally. 2015{\natexlab{b}}.
\newblock Learning both weights and connections for efficient neural network.
\newblock \emph{Advances in neural information processing systems}, 28.

\bibitem[{Hardt et~al.(2016)Hardt, Recht, and Singer}]{hardt2016train}
Moritz Hardt, Ben Recht, and Yoram Singer. 2016.
\newblock Train faster, generalize better: Stability of stochastic gradient
  descent.
\newblock In \emph{International conference on machine learning}, pages
  1225--1234. PMLR.

\bibitem[{He et~al.(2021{\natexlab{a}})He, Zhou, Ma, Berg-Kirkpatrick, and
  Neubig}]{he2021towards}
Junxian He, Chunting Zhou, Xuezhe Ma, Taylor Berg-Kirkpatrick, and Graham
  Neubig. 2021{\natexlab{a}}.
\newblock Towards a unified view of parameter-efficient transfer learning.
\newblock \emph{arXiv preprint arXiv:2110.04366}.

\bibitem[{He et~al.(2021{\natexlab{b}})He, Liu, Ye, Tan, Ding, Cheng, Low,
  Bing, and Si}]{he2021effectiveness}
Ruidan He, Linlin Liu, Hai Ye, Qingyu Tan, Bosheng Ding, Liying Cheng, Jiawei
  Low, Lidong Bing, and Luo Si. 2021{\natexlab{b}}.
\newblock On the effectiveness of adapter-based tuning for pretrained language
  model adaptation.
\newblock In \emph{Proceedings of the 59th Annual Meeting of the Association
  for Computational Linguistics and the 11th International Joint Conference on
  Natural Language Processing (Volume 1: Long Papers)}, pages 2208--2222.

\bibitem[{Houlsby et~al.(2019)Houlsby, Giurgiu, Jastrzebski, Morrone,
  De~Laroussilhe, Gesmundo, Attariyan, and Gelly}]{houlsby2019parameter}
Neil Houlsby, Andrei Giurgiu, Stanislaw Jastrzebski, Bruna Morrone, Quentin
  De~Laroussilhe, Andrea Gesmundo, Mona Attariyan, and Sylvain Gelly. 2019.
\newblock Parameter-efficient transfer learning for nlp.
\newblock In \emph{International Conference on Machine Learning}, pages
  2790--2799. PMLR.

\bibitem[{Hu et~al.(2022)Hu, yelong shen, Wallis, Allen-Zhu, Li, Wang, Wang,
  and Chen}]{hu2022lora}
Edward~J Hu, yelong shen, Phillip Wallis, Zeyuan Allen-Zhu, Yuanzhi Li, Shean
  Wang, Lu~Wang, and Weizhu Chen. 2022.
\newblock \href {https://openreview.net/forum?id=nZeVKeeFYf9} {Lo{RA}: Low-rank
  adaptation of large language models}.
\newblock In \emph{International Conference on Learning Representations}.

\bibitem[{Hua et~al.(2021)Hua, Li, Dou, Xu, and Luo}]{hua2021noise}
Hang Hua, Xingjian Li, Dejing Dou, Chengzhong Xu, and Jiebo Luo. 2021.
\newblock Noise stability regularization for improving bert fine-tuning.
\newblock In \emph{Proceedings of the 2021 Conference of the North American
  Chapter of the Association for Computational Linguistics: Human Language
  Technologies}, pages 3229--3241.

\bibitem[{Jiang et~al.(2020)Jiang, He, Chen, Liu, Gao, and
  Zhao}]{jiang2020smart}
Haoming Jiang, Pengcheng He, Weizhu Chen, Xiaodong Liu, Jianfeng Gao, and Tuo
  Zhao. 2020.
\newblock Smart: Robust and efficient fine-tuning for pre-trained natural
  language models through principled regularized optimization.
\newblock In \emph{Proceedings of the 58th Annual Meeting of the Association
  for Computational Linguistics}, pages 2177--2190.

\bibitem[{Karimi~Mahabadi et~al.(2021)Karimi~Mahabadi, Henderson, and
  Ruder}]{karimi2021compacter}
Rabeeh Karimi~Mahabadi, James Henderson, and Sebastian Ruder. 2021.
\newblock Compacter: Efficient low-rank hypercomplex adapter layers.
\newblock \emph{Advances in Neural Information Processing Systems}, 34.

\bibitem[{Kenton and Toutanova(2019)}]{kenton2019bert}
Jacob Devlin Ming-Wei~Chang Kenton and Lee~Kristina Toutanova. 2019.
\newblock Bert: Pre-training of deep bidirectional transformers for language
  understanding.
\newblock In \emph{Proceedings of NAACL-HLT}, pages 4171--4186.

\bibitem[{Kim et~al.(2021)Kim, Shum, Susanj, and Hilgart}]{kim2021revisiting}
Seungwon Kim, Alex Shum, Nathan Susanj, and Jonathan Hilgart. 2021.
\newblock Revisiting pretraining with adapters.
\newblock In \emph{Proceedings of the 6th Workshop on Representation Learning
  for NLP (RepL4NLP-2021)}, pages 90--99.

\bibitem[{Kuzborskij and Lampert(2018)}]{kuzborskij2018data}
Ilja Kuzborskij and Christoph Lampert. 2018.
\newblock Data-dependent stability of stochastic gradient descent.
\newblock In \emph{International Conference on Machine Learning}, pages
  2815--2824. PMLR.

\bibitem[{Lagunas et~al.(2021)Lagunas, Charlaix, Sanh, and
  Rush}]{lagunas2021block}
Fran{\c{c}}ois Lagunas, Ella Charlaix, Victor Sanh, and Alexander~M Rush. 2021.
\newblock Block pruning for faster transformers.
\newblock In \emph{Proceedings of the 2021 Conference on Empirical Methods in
  Natural Language Processing}, pages 10619--10629.

\bibitem[{Lan et~al.(2020)Lan, Chen, Goodman, Gimpel, Sharma, and
  Soricut}]{lan2020albert}
Zhenzhong Lan, Mingda Chen, Sebastian Goodman, Kevin Gimpel, Piyush Sharma, and
  Radu Soricut. 2020.
\newblock Albert: A lite bert for self-supervised learning of language
  representations.
\newblock In \emph{International Conference on Learning Representations}.

\bibitem[{Lee et~al.(2019)Lee, Cho, and Kang}]{lee2019mixout}
Cheolhyoung Lee, Kyunghyun Cho, and Wanmo Kang. 2019.
\newblock Mixout: Effective regularization to finetune large-scale pretrained
  language models.
\newblock In \emph{International Conference on Learning Representations}.

\bibitem[{Lee et~al.(2021)Lee, Park, Mo, Ahn, and Shin}]{lee2021layeradaptive}
Jaeho Lee, Sejun Park, Sangwoo Mo, Sungsoo Ahn, and Jinwoo Shin. 2021.
\newblock \href {https://openreview.net/forum?id=H6ATjJ0TKdf} {Layer-adaptive
  sparsity for the magnitude-based pruning}.
\newblock In \emph{International Conference on Learning Representations}.

\bibitem[{Lester et~al.(2021)Lester, Al-Rfou, and Constant}]{lester2021power}
Brian Lester, Rami Al-Rfou, and Noah Constant. 2021.
\newblock The power of scale for parameter-efficient prompt tuning.
\newblock In \emph{Proceedings of the 2021 Conference on Empirical Methods in
  Natural Language Processing}, pages 3045--3059.

\bibitem[{Levesque et~al.(2012)Levesque, Davis, and
  Morgenstern}]{levesque2012winograd}
Hector Levesque, Ernest Davis, and Leora Morgenstern. 2012.
\newblock The winograd schema challenge.
\newblock In \emph{Thirteenth international conference on the principles of
  knowledge representation and reasoning}.

\bibitem[{Lewis et~al.(2020)Lewis, Liu, Goyal, Ghazvininejad, Mohamed, Levy,
  Stoyanov, and Zettlemoyer}]{lewis2020bart}
Mike Lewis, Yinhan Liu, Naman Goyal, Marjan Ghazvininejad, Abdelrahman Mohamed,
  Omer Levy, Veselin Stoyanov, and Luke Zettlemoyer. 2020.
\newblock Bart: Denoising sequence-to-sequence pre-training for natural
  language generation, translation, and comprehension.
\newblock In \emph{Proceedings of the 58th Annual Meeting of the Association
  for Computational Linguistics}, pages 7871--7880.

\bibitem[{Li and Zhang(2021)}]{li2021improved}
Dongyue Li and Hongyang Zhang. 2021.
\newblock Improved regularization and robustness for fine-tuning in neural
  networks.
\newblock \emph{Advances in Neural Information Processing Systems}, 34.

\bibitem[{Li and Liang(2021)}]{li2021prefix}
Xiang~Lisa Li and Percy Liang. 2021.
\newblock Prefix-tuning: Optimizing continuous prompts for generation.
\newblock In \emph{Proceedings of the 59th Annual Meeting of the Association
  for Computational Linguistics and the 11th International Joint Conference on
  Natural Language Processing (Volume 1: Long Papers)}, pages 4582--4597.

\bibitem[{Liang et~al.(2021)Liang, Zhao, Wang, Qiu, and Li}]{liang2021finding}
Jianze Liang, Chengqi Zhao, Mingxuan Wang, Xipeng Qiu, and Lei Li. 2021.
\newblock Finding sparse structures for domain specific neural machine
  translation.
\newblock In \emph{Proceedings of the AAAI Conference on Artificial
  Intelligence}, volume~35, pages 13333--13342.

\bibitem[{Liu et~al.(2022)Liu, Tam, Muqeeth, Mohta, Huang, Bansal, and
  Raffel}]{Liu2022FewShotPF}
Haokun Liu, Derek Tam, Mohammed Muqeeth, Jay Mohta, Tenghao Huang, Mohit
  Bansal, and Colin Raffel. 2022.
\newblock Few-shot parameter-efficient fine-tuning is better and cheaper than
  in-context learning.

\bibitem[{Liu et~al.(2021)Liu, Ji, Fu, Du, Yang, and Tang}]{liu2021p}
Xiao Liu, Kaixuan Ji, Yicheng Fu, Zhengxiao Du, Zhilin Yang, and Jie Tang.
  2021.
\newblock P-tuning v2: Prompt tuning can be comparable to fine-tuning
  universally across scales and tasks.
\newblock \emph{arXiv preprint arXiv:2110.07602}.

\bibitem[{Liu et~al.(2019)Liu, Ott, Goyal, Du, Joshi, Chen, Levy, Lewis,
  Zettlemoyer, and Stoyanov}]{liu2019roberta}
Yinhan Liu, Myle Ott, Naman Goyal, Jingfei Du, Mandar Joshi, Danqi Chen, Omer
  Levy, Mike Lewis, Luke Zettlemoyer, and Veselin Stoyanov. 2019.
\newblock Roberta: A robustly optimized bert pretraining approach.

\bibitem[{Mahabadi et~al.(2021)Mahabadi, Ruder, Dehghani, and
  Henderson}]{mahabadi2021parameter}
Rabeeh~Karimi Mahabadi, Sebastian Ruder, Mostafa Dehghani, and James Henderson.
  2021.
\newblock Parameter-efficient multi-task fine-tuning for transformers via
  shared hypernetworks.
\newblock In \emph{Proceedings of the 59th Annual Meeting of the Association
  for Computational Linguistics and the 11th International Joint Conference on
  Natural Language Processing (Volume 1: Long Papers)}, pages 565--576.

\bibitem[{Mallya et~al.(2018)Mallya, Davis, and Lazebnik}]{mallya2018piggyback}
Arun Mallya, Dillon Davis, and Svetlana Lazebnik. 2018.
\newblock Piggyback: Adapting a single network to multiple tasks by learning to
  mask weights.
\newblock In \emph{Proceedings of the European Conference on Computer Vision
  (ECCV)}, pages 67--82.

\bibitem[{Mao et~al.(2021)Mao, Mathias, Hou, Almahairi, Ma, Han, Yih, and
  Khabsa}]{mao2021unipelt}
Yuning Mao, Lambert Mathias, Rui Hou, Amjad Almahairi, Hao Ma, Jiawei Han,
  Wen-tau Yih, and Madian Khabsa. 2021.
\newblock Unipelt: A unified framework for parameter-efficient language model
  tuning.
\newblock \emph{arXiv preprint arXiv:2110.07577}.

\bibitem[{Mosbach et~al.(2020)Mosbach, Andriushchenko, and
  Klakow}]{mosbach2020stability}
Marius Mosbach, Maksym Andriushchenko, and Dietrich Klakow. 2020.
\newblock On the stability of fine-tuning bert: Misconceptions, explanations,
  and strong baselines.
\newblock In \emph{International Conference on Learning Representations}.

\bibitem[{Mostafa and Wang(2019)}]{mostafa2019parameter}
Hesham Mostafa and Xin Wang. 2019.
\newblock Parameter efficient training of deep convolutional neural networks by
  dynamic sparse reparameterization.
\newblock In \emph{International Conference on Machine Learning}, pages
  4646--4655. PMLR.

\bibitem[{Panahi et~al.(2021)Panahi, Saeedi, and
  Arodz}]{panahi2021shapeshifter}
Aliakbar Panahi, Seyran Saeedi, and Tom Arodz. 2021.
\newblock Shapeshifter: a parameter-efficient transformer using factorized
  reshaped matrices.
\newblock \emph{Advances in Neural Information Processing Systems}, 34.

\bibitem[{Peters et~al.(2018)Peters, Neumann, Iyyer, Gardner, Clark, Lee, and
  Zettlemoyer}]{Peters2018DeepCW}
Matthew~E. Peters, Mark Neumann, Mohit Iyyer, Matt Gardner, Christopher Clark,
  Kenton Lee, and Luke Zettlemoyer. 2018.
\newblock Deep contextualized word representations.
\newblock In \emph{NAACL}.

\bibitem[{Pfeiffer et~al.(2020)Pfeiffer, R{\"u}ckl{\'e}, Poth, Kamath,
  Vuli{\'c}, Ruder, Cho, and Gurevych}]{pfeiffer2020adapterhub}
Jonas Pfeiffer, Andreas R{\"u}ckl{\'e}, Clifton Poth, Aishwarya Kamath, Ivan
  Vuli{\'c}, Sebastian Ruder, Kyunghyun Cho, and Iryna Gurevych. 2020.
\newblock Adapterhub: A framework for adapting transformers.
\newblock In \emph{Proceedings of the 2020 Conference on Empirical Methods in
  Natural Language Processing: System Demonstrations}, pages 46--54.

\bibitem[{Phang et~al.(2018)Phang, F{\'e}vry, and Bowman}]{phang2018sentence}
Jason Phang, Thibault F{\'e}vry, and Samuel~R Bowman. 2018.
\newblock Sentence encoders on stilts: Supplementary training on intermediate
  labeled-data tasks.

\bibitem[{Phang et~al.(2020)Phang, Yeres, Swanson, Liu, Tenney, Htut, Vania,
  Wang, and Bowman}]{phang2020jiant}
Jason Phang, Phil Yeres, Jesse Swanson, Haokun Liu, Ian~F. Tenney, Phu~Mon
  Htut, Clara Vania, Alex Wang, and Samuel~R. Bowman. 2020.
\newblock \texttt{jiant} 2.0: A software toolkit for research on
  general-purpose text understanding models.
\newblock \url{http://jiant.info/}.

\bibitem[{Pruksachatkun et~al.(2020)Pruksachatkun, Phang, Liu, Htut, Zhang,
  Pang, Vania, Kann, and Bowman}]{pruksachatkun2020intermediate}
Yada Pruksachatkun, Jason Phang, Haokun Liu, Phu~Mon Htut, Xiaoyi Zhang,
  Richard~Yuanzhe Pang, Clara Vania, Katharina Kann, and Samuel Bowman. 2020.
\newblock Intermediate-task transfer learning with pretrained language models:
  When and why does it work?
\newblock In \emph{Proceedings of the 58th Annual Meeting of the Association
  for Computational Linguistics}, pages 5231--5247.

\bibitem[{Qiu et~al.(2020)Qiu, Sun, Xu, Shao, Dai, and Huang}]{qiu2020pre}
Xipeng Qiu, Tianxiang Sun, Yige Xu, Yunfan Shao, Ning Dai, and Xuanjing Huang.
  2020.
\newblock Pre-trained models for natural language processing: A survey.
\newblock \emph{Science China Technological Sciences}, 63(10):1872--1897.

\bibitem[{Radford et~al.(2018)Radford, Narasimhan, Salimans, and
  Sutskever}]{radford2018improving}
Alec Radford, Karthik Narasimhan, Tim Salimans, and Ilya Sutskever. 2018.
\newblock Improving language understanding by generative pre-training.

\bibitem[{Radford et~al.(2019)Radford, Wu, Child, Luan, Amodei, Sutskever
  et~al.}]{radford2019language}
Alec Radford, Jeffrey Wu, Rewon Child, David Luan, Dario Amodei, Ilya
  Sutskever, et~al. 2019.
\newblock Language models are unsupervised multitask learners.
\newblock \emph{OpenAI blog}, 1(8):9.

\bibitem[{Radiya-Dixit and Wang(2020)}]{radiya2020fine}
Evani Radiya-Dixit and Xin Wang. 2020.
\newblock How fine can fine-tuning be? learning efficient language models.
\newblock In \emph{International Conference on Artificial Intelligence and
  Statistics}, pages 2435--2443. PMLR.

\bibitem[{Raffel et~al.(2020)Raffel, Shazeer, Roberts, Lee, Narang, Matena,
  Zhou, Li, and Liu}]{raffel2020exploring}
Colin Raffel, Noam Shazeer, Adam Roberts, Katherine Lee, Sharan Narang, Michael
  Matena, Yanqi Zhou, Wei Li, and Peter~J Liu. 2020.
\newblock Exploring the limits of transfer learning with a unified text-to-text
  transformer.
\newblock \emph{Journal of Machine Learning Research}, 21:1--67.

\bibitem[{Ricotti et~al.(1988)Ricotti, Ragazzini, and
  Martinelli}]{ricotti1988learning}
Lucio~Prina Ricotti, Susanna Ragazzini, and Giuseppe Martinelli. 1988.
\newblock Learning of word stress in a sub-optimal second order
  back-propagation neural network.
\newblock In \emph{ICNN}, volume~1, pages 355--361.

\bibitem[{Roemmele et~al.(2011)Roemmele, Bejan, and
  Gordon}]{roemmele2011choice}
Melissa Roemmele, Cosmin~Adrian Bejan, and Andrew~S Gordon. 2011.
\newblock Choice of plausible alternatives: An evaluation of commonsense causal
  reasoning.
\newblock In \emph{AAAI spring symposium: logical formalizations of commonsense
  reasoning}, pages 90--95.

\bibitem[{R{\"u}ckl{\'e} et~al.(2021)R{\"u}ckl{\'e}, Geigle, Glockner, Beck,
  Pfeiffer, Reimers, and Gurevych}]{ruckle2021adapterdrop}
Andreas R{\"u}ckl{\'e}, Gregor Geigle, Max Glockner, Tilman Beck, Jonas
  Pfeiffer, Nils Reimers, and Iryna Gurevych. 2021.
\newblock Adapterdrop: On the efficiency of adapters in transformers.
\newblock In \emph{Proceedings of the 2021 Conference on Empirical Methods in
  Natural Language Processing}, pages 7930--7946.

\bibitem[{Salman et~al.(2020)Salman, Ilyas, Engstrom, Kapoor, and
  Madry}]{salman2020adversarially}
Hadi Salman, Andrew Ilyas, Logan Engstrom, Ashish Kapoor, and Aleksander Madry.
  2020.
\newblock Do adversarially robust imagenet models transfer better?
\newblock \emph{Advances in Neural Information Processing Systems},
  33:3533--3545.

\bibitem[{Sanh et~al.(2020)Sanh, Wolf, and Rush}]{sanh2020movement}
Victor Sanh, Thomas Wolf, and Alexander Rush. 2020.
\newblock Movement pruning: Adaptive sparsity by fine-tuning.
\newblock \emph{Advances in Neural Information Processing Systems},
  33:20378--20389.

\bibitem[{Shalev-Shwartz and Ben-David(2014)}]{shalev2014understanding}
Shai Shalev-Shwartz and Shai Ben-David. 2014.
\newblock \emph{Understanding machine learning: From theory to algorithms}.
\newblock Cambridge university press.

\bibitem[{Shalev-Shwartz et~al.(2010)Shalev-Shwartz, Shamir, Srebro, and
  Sridharan}]{shalev2010learnability}
Shai Shalev-Shwartz, Ohad Shamir, Nathan Srebro, and Karthik Sridharan. 2010.
\newblock Learnability, stability and uniform convergence.
\newblock \emph{The Journal of Machine Learning Research}, 11:2635--2670.

\bibitem[{Spearman(1904)}]{spearman1904proof}
Charles Spearman. 1904.
\newblock The proof and measurement of association between two things.
\newblock \emph{The American journal of psychology}, 15(1):72--101.

\bibitem[{Sung et~al.(2021)Sung, Nair, and Raffel}]{sung2021training}
Yi-Lin Sung, Varun Nair, and Colin~A Raffel. 2021.
\newblock Training neural networks with fixed sparse masks.
\newblock \emph{Advances in Neural Information Processing Systems}, 34.

\bibitem[{Wang et~al.(2019)Wang, Pruksachatkun, Nangia, Singh, Michael, Hill,
  Levy, and Bowman}]{wang2019superglue}
Alex Wang, Yada Pruksachatkun, Nikita Nangia, Amanpreet Singh, Julian Michael,
  Felix Hill, Omer Levy, and Samuel Bowman. 2019.
\newblock Superglue: A stickier benchmark for general-purpose language
  understanding systems.
\newblock \emph{Advances in neural information processing systems}, 32.

\bibitem[{Wang et~al.(2018)Wang, Singh, Michael, Hill, Levy, and
  Bowman}]{wang2018glue}
Alex Wang, Amanpreet Singh, Julian Michael, Felix Hill, Omer Levy, and Samuel
  Bowman. 2018.
\newblock Glue: A multi-task benchmark and analysis platform for natural
  language understanding.
\newblock In \emph{Proceedings of the 2018 EMNLP Workshop BlackboxNLP:
  Analyzing and Interpreting Neural Networks for NLP}, pages 353--355.

\bibitem[{Warstadt et~al.(2019)Warstadt, Singh, and
  Bowman}]{warstadt2019neural}
Alex Warstadt, Amanpreet Singh, and Samuel~R Bowman. 2019.
\newblock Neural network acceptability judgments.
\newblock \emph{Transactions of the Association for Computational Linguistics},
  7:625--641.

\bibitem[{Wolf et~al.(2020)Wolf, Debut, Sanh, Chaumond, Delangue, Moi, Cistac,
  Rault, Louf, Funtowicz, Davison, Shleifer, von Platen, Ma, Jernite, Plu, Xu,
  Scao, Gugger, Drame, Lhoest, and Rush}]{wolf-etal-2020-transformers}
Thomas Wolf, Lysandre Debut, Victor Sanh, Julien Chaumond, Clement Delangue,
  Anthony Moi, Pierric Cistac, Tim Rault, Rémi Louf, Morgan Funtowicz, Joe
  Davison, Sam Shleifer, Patrick von Platen, Clara Ma, Yacine Jernite, Julien
  Plu, Canwen Xu, Teven~Le Scao, Sylvain Gugger, Mariama Drame, Quentin Lhoest,
  and Alexander~M. Rush. 2020.
\newblock \href {https://www.aclweb.org/anthology/2020.emnlp-demos.6}
  {Transformers: State-of-the-art natural language processing}.
\newblock In \emph{Proceedings of the 2020 Conference on Empirical Methods in
  Natural Language Processing: System Demonstrations}, pages 38--45, Online.
  Association for Computational Linguistics.

\bibitem[{Xu et~al.(2020)Xu, Roosta, and Mahoney}]{xu2020newton}
Peng Xu, Fred Roosta, and Michael~W Mahoney. 2020.
\newblock Newton-type methods for non-convex optimization under inexact hessian
  information.
\newblock \emph{Mathematical Programming}, 184(1):35--70.

\bibitem[{Xu et~al.(2021)Xu, Luo, Zhang, Tan, Chang, Huang, and
  Huang}]{xu2021raise}
Runxin Xu, Fuli Luo, Zhiyuan Zhang, Chuanqi Tan, Baobao Chang, Songfang Huang,
  and Fei Huang. 2021.
\newblock Raise a child in large language model: Towards effective and
  generalizable fine-tuning.
\newblock In \emph{Proceedings of the 2021 Conference on Empirical Methods in
  Natural Language Processing}, pages 9514--9528.

\bibitem[{Xuhong et~al.(2018)Xuhong, Grandvalet, and
  Davoine}]{xuhong2018explicit}
LI~Xuhong, Yves Grandvalet, and Franck Davoine. 2018.
\newblock Explicit inductive bias for transfer learning with convolutional
  networks.
\newblock In \emph{International Conference on Machine Learning}, pages
  2825--2834. PMLR.

\bibitem[{Yao et~al.(2021{\natexlab{a}})Yao, Gholami, Shen, Mustafa, Keutzer,
  and Mahoney}]{yao2021adahessian}
Zhewei Yao, Amir Gholami, Sheng Shen, Mustafa Mustafa, Kurt Keutzer, and
  Michael Mahoney. 2021{\natexlab{a}}.
\newblock Adahessian: An adaptive second order optimizer for machine learning.
\newblock In \emph{Proceedings of the AAAI Conference on Artificial
  Intelligence}, volume~35, pages 10665--10673.

\bibitem[{Yao et~al.(2021{\natexlab{b}})Yao, Xu, Roosta, and
  Mahoney}]{yao2021inexact}
Zhewei Yao, Peng Xu, Fred Roosta, and Michael~W Mahoney. 2021{\natexlab{b}}.
\newblock Inexact nonconvex newton-type methods.
\newblock \emph{Informs Journal on Optimization}, 3(2):154--182.

\bibitem[{You et~al.(2019)You, Li, Xu, Fu, Wang, Chen, Baraniuk, Wang, and
  Lin}]{you2019drawing}
Haoran You, Chaojian Li, Pengfei Xu, Yonggan Fu, Yue Wang, Xiaohan Chen,
  Richard~G Baraniuk, Zhangyang Wang, and Yingyan Lin. 2019.
\newblock Drawing early-bird tickets: Toward more efficient training of deep
  networks.
\newblock In \emph{International Conference on Learning Representations}.

\bibitem[{Zaken et~al.(2021)Zaken, Ravfogel, and Goldberg}]{zaken2021bitfit}
Elad~Ben Zaken, Shauli Ravfogel, and Yoav Goldberg. 2021.
\newblock Bitfit: Simple parameter-efficient fine-tuning for transformer-based
  masked language-models.
\newblock \emph{arXiv preprint arXiv:2106.10199}.

\bibitem[{Zhang et~al.(2020)Zhang, Wu, Katiyar, Weinberger, and
  Artzi}]{zhang2020revisiting}
Tianyi Zhang, Felix Wu, Arzoo Katiyar, Kilian~Q Weinberger, and Yoav Artzi.
  2020.
\newblock Revisiting few-sample bert fine-tuning.
\newblock In \emph{International Conference on Learning Representations}.

\bibitem[{Zhao et~al.(2021)Zhao, Wallace, Feng, Klein, and
  Singh}]{zhao2021calibrate}
Zihao Zhao, Eric Wallace, Shi Feng, Dan Klein, and Sameer Singh. 2021.
\newblock Calibrate before use: Improving few-shot performance of language
  models.
\newblock In \emph{International Conference on Machine Learning}, pages
  12697--12706. PMLR.

\bibitem[{Zhu et~al.(2020)Zhu, Cheng, Gan, Sun, Goldstein, and
  Liu}]{zhu2020freelb}
Chen Zhu, Yu~Cheng, Zhe Gan, Siqi Sun, Tom Goldstein, and Jingjing Liu. 2020.
\newblock Freelb: Enhanced adversarial training for natural language
  understanding.
\newblock In \emph{ICLR}.

\end{thebibliography}

\clearpage
\appendix
\onecolumn
\begin{center}
  {\LARGE \textbf{Appendix. Supplementary Material}}
\end{center}
\renewcommand{\thesection}{A.\arabic{section}}
\setcounter{theorem}{0}
\setcounter{lemma}{0}
\setcounter{corollary}{0}
\setcounter{proposition}{0}

\section{Proof of Proposition \ref{thm:toreg}}\label{sec:A-toreg}
\begin{proposition}
  \thmtoreg
\end{proposition}

\begin{proof}
  We write the Lagrangian of Problem (\ref{eqn:primal}) as:
  $$\bar{L}=\min_\theta \max_\lambda \mathcal{L}(\theta)+\lambda\|(I-M)(\theta-\theta^0)\|^2$$
  where $\lambda$ is the Lagrangian multiplier. Therefore, Problem (\ref{eqn:primal}) is equivalent to solving the following problem:
 
  $$\begin{aligned}
    \min_\theta \max_\lambda \mathcal{L}(\theta)+\lambda\|(I-M)(\theta-\theta^0)\|^2\\
    \ge \max_\lambda \min_\theta  \mathcal{L}(\theta)+\lambda\|(I-M)(\theta-\theta^0)\|^2\\
    \ge \min_\theta  \mathcal{L}(\theta)+\|(I-M)(\theta-\theta^0)\|^2\\
  \end{aligned}$$
\end{proof}

\section{Proof of Theorem \ref{thm:stable}}\label{sec:A-stable}

Given training dataset $S=\{z_1,\cdots,z_n\}$, $S^i=S\backslash z_i=\{z_1,\cdots,z_{i-1},z_{i+1},\cdots,z_n\}$. Loss function $\mathcal{L}(\theta)=\frac{1}{n}\sum_{i=1}^n\ell(z_i)$.

The following proof is derived from \cite{shalev2014understanding} where the original Lemma utilizes the convex assumption which is not guaranteed in the neural network. We use the Taylor expansion instead of the convex assumption which makes it more suitable for describing the neural network models. However, the cost is that the bound will be related to a constant that is determined by the specific shape around the local minima.

\begin{lemma}[]
  \label{lemma:hps}
  Assume that the loss function $\ell$ is $\rho$ -Lipschitz. $\A(S^i)$ closes to $\A(S)$. The Hessian matrix $\nabla^2\mathcal{L}(\A(S))$ at $\A(S)$ is a positive-semidefinite matrix with a singular value decomposition as $U\operatorname{diag}(\Lambda) U^{-1}$, $\Lambda=\{\Lambda_1,\cdots,\Lambda_m\}$ and $\Lambda_{min}=\min\{\Lambda_1,\cdots,\Lambda_m\}$.  Then, the learning algorithm $\A$ deﬁned by
  $$\A(S)=\arg\min_w \frac{1}{n}\sum_{i=1}^n\ell(z_i)+\lambda \|w-w_0\|^2$$
  has pointwise hypothesis stability $\epsilon$ with rate $\frac{2\rho^2 }{(\Lambda_{min}+2\lambda)n}$. It follows that:
  $$\E_{S,i\sim U(n)}[|\ell(\A(S^{i}),z_i)-\ell(\A(S),z_i)|]\le \frac{2\rho^2 }{(\Lambda_{min}+2\lambda)n}$$
\end{lemma}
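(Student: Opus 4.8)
The plan is to adapt the classical stability argument for regularized ERM (as in \cite{shalev2014understanding}) but replace the strong-convexity assumption with a local second-order expansion around the minimizer $\A(S)$. First I would set up notation: write $w = \A(S)$ and $w^i = \A(S^i)$ for the minimizers of the regularized objectives on $S$ and $S^i$ respectively, with regularized objective $F_S(w) = \frac{1}{n}\sum_{j}\ell(w,z_j) + \lambda\|w-w_0\|^2$. The key structural fact is that removing the single point $z_i$ changes the objective only by the term $\frac{1}{n}\ell(\cdot,z_i)$, so $F_{S^i}(w) = F_S(w) - \frac{1}{n}\ell(w,z_i)$ up to a harmless rescaling (or one works directly with $\frac{1}{n}\sum_{j\ne i}$, which I would make precise so the constants match the statement).

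Next I would exploit optimality and curvature. Since $w^i$ minimizes $F_{S^i}$ and $w$ minimizes $F_S$, and since $F_S$ is locally well-approximated near $w$ by its second-order Taylor expansion $F_S(w) + \frac12 (v-w)^\top(\nabla^2\mathcal{L}(w) + 2\lambda I)(v-w)$ (the Hessian of the regularizer being exactly $2\lambda I$), the effective local strong-convexity modulus is $\Lambda_{min} + 2\lambda$, where $\Lambda_{min}$ is the smallest singular value of $\nabla^2\mathcal{L}(w)$ (which is PSD since $w$ is a minimizer). Combining the first-order optimality of $w^i$ on $F_{S^i}$ with this quadratic lower bound on $F_S$ around $w$, and using that $\nabla F_{S^i}(w) = -\frac{1}{n}\nabla\ell(w,z_i)$ has norm at most $\rho/n$ by the Lipschitz assumption, I get the parameter perturbation bound $\|w^i - w\| \le \frac{2\rho}{(\Lambda_{min}+2\lambda)n}$. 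Then $\rho$-Lipschitzness of $\ell$ in its first argument gives $|\ell(w^i,z_i) - \ell(w,z_i)| \le \rho\|w^i-w\| \le \frac{2\rho^2}{(\Lambda_{min}+2\lambda)n}$. Taking expectation over $S$ and $i\sim U(n)$ preserves the bound since it is uniform, yielding the claimed pointwise hypothesis stability.

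The main obstacle — and the place where rigor is genuinely delicate — is the step that replaces strong convexity by the Taylor expansion. In the truly convex case one has the clean inequality $F_S(w^i) \ge F_S(w) + \frac{\Lambda_{min}+2\lambda}{2}\|w^i-w\|^2$ globally; here it holds only approximately and only in a neighborhood of $w$, so one must invoke the hypothesis ``$\A(S^i)$ is close to $\A(S)$'' to stay inside the region where the quadratic model dominates, and absorb the higher-order remainder. I would state this cleanly by assuming the neighborhood is small enough that the remainder is negligible relative to the quadratic term (this is exactly the ``constant determined by the local shape'' caveat the authors flag before the lemma), then run the two-sided comparison: $F_{S^i}(w^i)\le F_{S^i}(w)$ by optimality, rewrite both sides in terms of $F_S$ minus the $\frac{1}{n}\ell(\cdot,z_i)$ term, apply the quadratic lower bound on $F_S(w^i)-F_S(w)$, and bound the cross term $\ell(w,z_i)-\ell(w^i,z_i)$ by $\rho\|w^i-w\|$. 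Solving the resulting quadratic inequality in $\|w^i-w\|$ gives the linear-in-$1/n$ bound. Everything after that — the final Lipschitz step and the expectation — is routine.
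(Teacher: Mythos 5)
Your proposal is correct and follows essentially the same route as the paper's proof: a second-order Taylor expansion of the regularized objective around $\A(S)$ yielding the quadratic lower bound with modulus $\Lambda_{min}+2\lambda$, the optimality of $\A(S^i)$ on the leave-one-out objective to isolate the $\frac{1}{n}(\ell(\A(S^i),z_i)-\ell(\A(S),z_i))$ term, the $\rho$-Lipschitz bound on that term, and solving the resulting inequality for $\|\A(S^i)-\A(S)\|$ before plugging back in. You also correctly flag the two points the paper glosses over (the local-only validity of the quadratic lower bound under the ``closeness'' hypothesis, and the $\frac{1}{n}$ versus $\frac{1}{n-1}$ normalization handled by ``$n$ large enough'').
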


\begin{proof}
  We denote $f_S(w)=\mathcal{L}(w)+\lambda \|w-w_0\|^2$, and 
  $\A(S)=\arg\min_w f_S(w)$. As $\A(S)$ minimize $f_S(w)$, we have $\nabla f_S(\A(S))=0$. 
  $\forall v$ close to $\A(S)$, we have:
  $$\begin{aligned}
    f_S(v)=f_S(\A(S))+&[v-\A(S)]^T\nabla f_S(\A(S))+\frac{1}{2}[v-\A(S)]^T\nabla^2f_S(\A(S))[v-\A(S)]\\
    f_S(v)=f_S(\A(S))+&\frac{1}{2}[v-\A(S)]^T\nabla^2f_S(\A(S))[v-\A(S)]\\
    f_S(v)-f_S(\A(S))&=\frac{1}{2}[v-\A(S)]^T\nabla^2f_S(\A(S))[v-\A(S)]\\
    f_S(v)-f_S(\A(S))&=\frac{1}{2}[v-\A(S)]^T\nabla^2_{w=A(S)}(\mathcal{L}(w)+\lambda \|w-w_0\|^2)[v-\A(S)]\\
    f_S(v)-f_S(\A(S))&=\frac{1}{2}[v-\A(S)]^T (\nabla^2\mathcal{L}(\A(S))+2\lambda I)[v-\A(S)]\\
    f_S(v)-f_S(\A(S))&=\frac{1}{2}[v-\A(S)]^T (U\operatorname{diag}(\Lambda) U^{-1}+2\lambda I)[v-\A(S)]\\
    f_S(v)-f_S(\A(S))&=\frac{1}{2}[v-\A(S)]^T (U(\operatorname{diag}(\Lambda) + 2\lambda I)U^{-1})[v-\A(S)]\\
    f_S(v)-f_S(\A(S))&=\frac{1}{2}[v-\A(S)]^T U(\operatorname{diag}(\Lambda + 2\lambda )U^{-1}[v-\A(S)]\\
    f_S(v)-f_S(\A(S))&=\frac{1}{2}[v-\A(S)]^T U\operatorname{diag}(\sqrt{\Lambda_1 + 2\lambda}, \cdots, \sqrt{\Lambda_m + 2\lambda}) \cdot \\ &\ \ \ \ \ \ \operatorname{diag}(\sqrt{\Lambda_1 + 2\lambda}, \cdots, \sqrt{\Lambda_m + 2\lambda}) U^{-1}[v-\A(S)]\\
    f_S(v)-f_S(\A(S))&=\frac{1}{2}[v-\A(S)]^T U\operatorname{diag}(\sqrt{\Lambda_1 + 2\lambda}, \cdots, \sqrt{\Lambda_m + 2\lambda}) U^{-1}\cdot \\ &\ \ \ \ \ \  U\operatorname{diag}(\sqrt{\Lambda_1 + 2\lambda}, \cdots, \sqrt{\Lambda_m + 2\lambda}) U^{-1}[v-\A(S)]\\
    f_S(v)-f_S(\A(S))&=\frac{1}{2}\|U\operatorname{diag}(\sqrt{\Lambda_1 + 2\lambda}, \cdots, \sqrt{\Lambda_m + 2\lambda}) U^{-1}[v-\A(S)]\|^2\\
    f_S(v)-f_S(\A(S))&\ge\frac{1}{2}(\Lambda_{min}+2\lambda)\|v-\A(S)\|^2\\
  \end{aligned}$$
  Then, if $n$ is large enough, by the definition of $f_S(w)$. $\forall u,v$, we have:
  $$\begin{aligned}
    f_S(v)-f_S(u)&=L_S(v) + \lambda \|v-w_0\|^2-(L_S(u) + \lambda \|u-w_0\|^2)\\
    &=L_{S^i}(v) + \lambda \|v-w_0\|^2-(L_{S^i}(u) + \lambda \|u-w_0\|^2) + \\ &\frac{\ell (v,z_i)-\ell (u,z_i)}{n}\\
  \end{aligned}$$
  Then, we choose $v=\A(S^i),u=\A(S)$. As $v$ minimizes $L_{S^i}(v)+\lambda\|v-w_0\|^2$, we have:
  $$f_S(\A(S^i))-f_S(\A(S))\le \frac{\ell (\A(S^i),z_i)-\ell (\A(S),z_i)}{n}$$
  $$\frac{1}{2}(\Lambda_{min}+2\lambda)\|\A(S^i)-\A(S)\|^2\le \frac{\ell (\A(S^i),z_i)-\ell (\A(S),z_i)}{n}$$
  As the loss function $\ell(\cdot, z_i)$ is $\rho-$Lipschitz, we have:
  \begin{equation}
    |\ell (\A(S^i),z_i)-\ell (\A(S),z_i)|\le \rho \|\A(S^i)-\A(S)\|
    \label{eqn:lipschitz}
  \end{equation}
  Then, we have:
  $$\begin{aligned}
    \frac{1}{2}(\Lambda_{min}+2\lambda)\|A(S^i)-\A(S)\|^2\le\frac{\rho \|A(S^i)-\A(S)\|}{n}\\
    \|A(S^i)-\A(S)\|\le\frac{2\rho }{(\Lambda_{min}+2\lambda)n}
  \end{aligned}$$
  Then, plug back into Equation (\ref{eqn:lipschitz}), we have:
  $$|\ell(\A(S^{(i)}),z_i)-\ell(\A(S),z_i)|\le \frac{2\rho^2 }{(\Lambda_{min}+2\lambda)n}.$$
  As this holds for any $S,i$ we immediately obtain:
  $$\E_{S,i\sim U(n)}[|\ell(\A(S^{(i)}),z_i)-\ell(\A(S),z_i)|]\le \frac{2\rho^2 }{(\Lambda_{min}+2\lambda)n}.$$
\end{proof}

\begin{theorem}[Stability] \thmstable
\end{theorem}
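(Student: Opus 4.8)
The plan is to reduce the statement to Lemma~\ref{lemma:hps} by reading off the correct regularization strength from Proposition~\ref{thm:toreg}. Proposition~\ref{thm:toreg} says that solving the sparse fine-tuned problem (\ref{eqn:primal}) amounts to minimizing an upper bound of the regularized objective $L_R=\min_\theta \mathcal{L}(\theta)+\|(I-M)(\theta-\theta^0)\|^2$. This is precisely the ridge-regularized algorithm analyzed in Lemma~\ref{lemma:hps}, except that the scalar weight $\lambda$ is replaced by the diagonal matrix $I-M$, which has $\lfloor mp\rfloor$ zero entries on the tuned coordinates and $m-\lfloor mp\rfloor$ unit entries on the frozen coordinates. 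So the only real work is to argue that, in expectation over the mask, this matrix penalty behaves like a scalar ridge penalty of strength $1-p$, and then to invoke the lemma.

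Concretely I would proceed as follows. First, fix $M$ and revisit the Hessian computation in the proof of Lemma~\ref{lemma:hps}: with the matrix regularizer the Hessian of $f_S$ at $\A(S)$ becomes $\nabla^2\mathcal{L}(\A(S))+2(I-M)$ in place of $\nabla^2\mathcal{L}(\A(S))+2\lambda I$. Second, take $\E_M$ over the (uniform) choice of which $\lfloor mp\rfloor$ coordinates are tuned; each diagonal entry of $I-M$ then has mean $\frac{m-\lfloor mp\rfloor}{m}\approx 1-p$, so $\E_M[\nabla^2\mathcal{L}(\A(S))+2(I-M)]=\nabla^2\mathcal{L}(\A(S))+2(1-p)I$ and likewise $\E_M\|(I-M)(\theta-\theta^0)\|^2=(1-p)\|\theta-\theta^0\|^2$. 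Third, conclude that $\E_M L_R$ is exactly the objective of Lemma~\ref{lemma:hps} with $\lambda=1-p$, so the quadratic lower bound $f_S(v)-f_S(\A(S))\ge\frac{1}{2}(\Lambda_{min}+2(1-p))\|v-\A(S)\|^2$ carries over. Fourth, repeat the remaining steps of the lemma — the leave-one-out decomposition of $f_S$, the optimality of $\A(S^i)$ on $S^i$, and the $\rho$-Lipschitzness of $\ell$ — to obtain $\|\A(S^i)-\A(S)\|\le\frac{2\rho}{(\Lambda_{min}+2(1-p))n}$ and hence $|\ell(\A(S^i),z_i)-\ell(\A(S),z_i)|\le\frac{2\rho^2}{(\Lambda_{min}+2(1-p))n}$; averaging over $S$ and $i\sim U(n)$ gives the claimed bound.

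I expect the main obstacle to be making the expectation over $M$ clean: the leave-one-out quantity $\ell(\A(S^i),z_i)-\ell(\A(S),z_i)$ is a nonlinear function of the mask, so $\E_M$ must be pushed through at the level of the objective / Hessian — where the dependence on $M$ is linear — rather than at the level of the final inequality, and one has to be comfortable with the slight informality that the minimizers $\A(S),\A(S^i)$ themselves depend on $M$. Two minor points also need care: replacing $\frac{m-\lfloor mp\rfloor}{m}$ by $1-p$ introduces an $O(1/m)$ slack that is negligible for large models but should be flagged, and Proposition~\ref{thm:toreg} only controls an upper bound $\bar L$ of $L_R$, so the statement is really about the algorithm minimizing this surrogate objective, which is consistent with the $\E_M L_R$ appearing in the theorem.
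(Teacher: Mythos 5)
Your proposal is correct and follows essentially the same route as the paper: the paper likewise computes $\E_M\|(I-M)(\theta-\theta^0)\|^2=(1-p)\|\theta-\theta^0\|^2$ (using $\E(1-M_{ii})^2=1-p$) to identify $\E_M L_R$ with the ridge-regularized objective of Lemma~\ref{lemma:hps} at $\lambda=1-p$, and then invokes that lemma directly. Your added caveats (the $O(1/m)$ slack from the floor, and the mask-dependence of the minimizers) are points the paper silently glosses over, but they do not change the argument.
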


\begin{proof}
  $$\begin{aligned}
  \end{aligned}$$
  $$\begin{aligned}
    \E_M L_R &= \mathcal{L}(\theta)+\E \|(I-M)(\theta^0-\theta)\|^2\\
    &=\mathcal{L}(\theta)+\E \sum_{i=1}^m(1-M_{ii})^2(\theta^0_i-\theta_i)^2\\
    &=\mathcal{L}(\theta)+ \sum_{i=1}^m(\theta^0_i-\theta_i)^2\E(1-M_{ii})^2\\
    &=\mathcal{L}(\theta)+ \sum_{i=1}^m(\theta^0_i-\theta_i)^2(1-p)\\
    &=\mathcal{L}(\theta)+(1-p)\|(\theta^0-\theta)\|^2 \\
  \end{aligned}$$
  From Lemma \ref{lemma:hps}, we have:
  $$\E_{S,i\sim U(m)}[|\ell(\A(S^i),z_i)-\ell(\A(S),z_i)|]\le \frac{2\rho^2 }{(\Lambda_{min}+2(1-p))n}$$
\end{proof}

\section{Proof of Theorem \ref{thm:generalization}}\label{sec:A-generalization}

We first give a Lemma from Theorem 11 of \citet{bousquet2002stability}. It has an extended version to random algorithms in \citet{elisseeff2005stability}.

\begin{lemma}
  For any learning algorithm $\A$ with pointwise hypothesis stability $\beta$ with respect to a loss function such that $0\le c(y,y')\le C$, we have with probability $1-\delta$,
  $$R(\A,S)\le \hat{R}(\A,S)+\sqrt{\frac{C^2+12Cn\beta}{2n\delta}},$$
where, $c(y,y')=|y-y'|$ is an absolute loss function.
\end{lemma}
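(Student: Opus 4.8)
The plan is to reduce the stated high-probability bound to a second-moment estimate on the generalization gap, and then to obtain that estimate by expanding $\E_S[(R(\A,S)-\hat R(\A,S))^2]$ and controlling the resulting terms with the loss bound $C$ and the pointwise hypothesis stability $\beta$ of Definition \ref{def:stability}. First I would view $X = R(\A,S)-\hat R(\A,S)$ as a random variable over the i.i.d.\ draw of $S$ and apply Chebyshev's inequality: for any $t>0$,
$$\prob_S\!\left(R(\A,S)-\hat R(\A,S)\ge t\right)\le \prob_S(X^2\ge t^2)\le \frac{\E_S[X^2]}{t^2}.$$
Setting $t=\sqrt{\E_S[X^2]/\delta}$ makes the right-hand side equal to $\delta$, so it suffices to prove the second-moment bound
$$\E_S\!\left[(R(\A,S)-\hat R(\A,S))^2\right]\le \frac{C^2}{2n}+6C\beta=\frac{C^2+12Cn\beta}{2n}.$$

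The core of the argument is this second-moment bound. I would expand the square as $\E_S[R^2]-2\E_S[R\,\hat R]+\E_S[\hat R^2]$ and rewrite each term as an expectation of a product of two single-sample losses, introducing independent ghost samples to represent the true risk $R=\E_{z}\ell(\A(S),z)$ and using the exchangeability of the i.i.d.\ sample to collapse each sum over indices to a few representative expectations. Matching these representatives across the three terms, the differences that survive all take the form $\ell(\A(S),z_i)-\ell(\A(S^i),z_i)$ (or a ghost-sample analogue), where $S^i=S\setminus z_i$ is the leave-one-out dataset of Definition \ref{def:stability}. Each such difference is bounded in expectation by $\beta$, and because it is multiplied by a loss value bounded by $C$, it contributes a term of order $C\beta$; the ``diagonal'' contributions, where two indices coincide and no sample is removed, are controlled purely by $0\le\ell\le C$ and produce the $\tfrac{C^2}{2n}$ piece. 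Carefully counting the $n(n-1)$ off-diagonal contributions yields the aggregate bound $6C\beta$.

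The hard part is the combinatorial bookkeeping in this expansion: I must pair the index terms of $\E_S[R^2]$, $\E_S[R\hat R]$, and $\E_S[\hat R^2]$ so that every surviving difference is a genuine leave-one-out perturbation matched against a bounded loss, and then track the multiplicities to pin down the constant (the factor $12$, i.e.\ $6C\beta$ before dividing by $2n$). Once $\E_S[X^2]\le \tfrac{C^2}{2n}+6C\beta$ is established, substituting into the Chebyshev bound gives, with probability $1-\delta$,
$$R(\A,S)-\hat R(\A,S)\le \sqrt{\frac{\E_S[X^2]}{\delta}}\le \sqrt{\frac{C^2+12Cn\beta}{2n\delta}},$$
which is exactly the claimed inequality.
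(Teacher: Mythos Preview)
Your outline is the standard Bousquet--Elisseeff argument and is correct in structure: Markov/Chebyshev on $X^2$ reduces the claim to the second-moment bound $\E_S[(R-\hat R)^2]\le \tfrac{C^2}{2n}+6C\beta$, and that bound is obtained exactly as you describe, by expanding the square, introducing ghost samples for $R$, using exchangeability to collapse the index sums, and bounding each surviving leave-one-out difference by $\beta$ against a loss factor bounded by $C$. The bookkeeping you flag as ``the hard part'' is real but routine; it is precisely Lemma~9 of \citet{bousquet2002stability}.

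The only point of comparison is that the paper does not prove this lemma at all: it simply quotes it as Theorem~11 of \citet{bousquet2002stability} (with the extension to randomized algorithms in \citet{elisseeff2005stability}) and then plugs in the stability rate from Theorem~\ref{thm:stable}. So your proposal is not a different route from the paper's proof; it is a reconstruction of the cited result that the paper takes as a black box.
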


\begin{theorem}[Generalization] 
  \thmgeneralization
\end{theorem}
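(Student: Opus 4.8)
The plan is to obtain Theorem \ref{thm:generalization} by feeding the pointwise hypothesis stability rate from Theorem \ref{thm:stable} into the Bousquet--Elisseeff generalization bound (the Lemma stated immediately above the theorem). This is essentially a one-line substitution, so the work is in lining up the hypotheses.

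First I would invoke the Lemma: for any learning algorithm $\A$ with pointwise hypothesis stability $\beta$ with respect to an absolute loss bounded by $C$, with probability at least $1-\delta$,
\[
  R(\A,S)\le \hat{R}(\A,S)+\sqrt{\frac{C^2+12Cn\beta}{2n\delta}}.
\]
Here the unspecified constant $C$ plays the role of a uniform upper bound on the per-sample loss $\ell(\cdot,z)$, and the stability parameter $\beta$ is the one appearing in Definition \ref{def:stability}.

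Next I would supply the value of $\beta$. By Theorem \ref{thm:stable} (which, via Proposition \ref{thm:toreg}, treats the sparse fine-tuned model as the regularized objective with multiplier $1-p$), the expectation of the loss has pointwise hypothesis stability
\[
  \beta=\frac{2\rho^2}{(\Lambda_{min}+2(1-p))n}.
\]
Substituting this into the cross term of the Lemma gives
\[
  12Cn\beta=12Cn\cdot\frac{2\rho^2}{(\Lambda_{min}+2(1-p))n}=\frac{24C\rho^2}{\Lambda_{min}+2(1-p)},
\]
and plugging back yields exactly the stated bound $R(\A,S)\le \hat{R}(\A,S)+\sqrt{\left(C^2+\frac{24C\rho^2}{\Lambda_{min}+2(1-p)}\right)/(2n\delta)}$.

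The only real obstacle is bookkeeping on the hypotheses rather than any computation: one must ensure the loss is bounded so that a finite $C$ exists, that the $\rho$-Lipschitz property and the positive-semidefinite Hessian condition of Theorem \ref{thm:stable} hold (they are inherited verbatim from that theorem's assumptions), and that the stability notion used in the Lemma is the same pointwise hypothesis stability bounded in Theorem \ref{thm:stable}. Once these are in place, the conclusion follows immediately by substitution.
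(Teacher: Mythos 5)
Your proposal matches the paper's own argument exactly: the paper proves Theorem \ref{thm:generalization} by citing the Bousquet--Elisseeff pointwise-hypothesis-stability lemma and (implicitly) substituting the rate $\beta=\frac{2\rho^2}{(\Lambda_{min}+2(1-p))n}$ from Theorem \ref{thm:stable}, which turns $12Cn\beta$ into $\frac{24C\rho^2}{\Lambda_{min}+2(1-p)}$ and yields the stated bound. Your bookkeeping of the hypotheses (bounded loss, Lipschitzness, positive-semidefinite Hessian) is the same as what the paper assumes, so the proof is correct and essentially identical.
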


\section{Proof of Theorem \ref{thm:minapprox}}\label{sec:A-minapprox}

\begin{theorem}
  \thmapprox
\end{theorem}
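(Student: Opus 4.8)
The plan is to exploit the single structural fact that makes Problem~(\ref{eqn:approx}) tractable: the surrogate Hessian $H=\operatorname{diag}\{h_1,\dots,h_m\}$ is diagonal, so once the diagonal $0/1$ mask $M$ is fixed the quadratic objective splits into independent one-dimensional problems. Write $g:=\nabla\LL(\theta^0)$ and, for a feasible mask $M$, let $\mathcal S=\{i:M_{ii}=1\}$ (so $|\mathcal S|=\lfloor mp\rfloor$). Since $(M\Delta\theta)_i=\Delta\theta_i$ for $i\in\mathcal S$ and $0$ otherwise, and $H$ is diagonal, the objective of Problem~(\ref{eqn:approx}) equals $\LL(\theta^0)+\sum_{i\in\mathcal S}\bigl(g_i\Delta\theta_i+\tfrac12 h_i\Delta\theta_i^2\bigr)$ — a sum of decoupled scalar quadratics over the coordinates in $\mathcal S$, with the off-support coordinates not appearing at all.

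First I would carry out the inner minimization over $\Delta\theta$ coordinatewise: for $h_i>0$ the map $t\mapsto g_i t+\tfrac12 h_i t^2$ is minimized at $t^\star=-g_i/h_i$ with value $-\tfrac12\,g_i^2/h_i$, so the infimum over $\Delta\theta$ of the objective, for the mask with support $\mathcal S$, is $\LL(\theta^0)-\tfrac12\sum_{i\in\mathcal S}g_i^2/h_i$. (The case $h_i=0$, $g_i\neq0$ drives both sides of the theorem to $-\infty$, matching the convention $g_i^2/0=+\infty$ implicit in the statement; when $g_i=0$ the term is $0$ and is irrelevant.) Then I would minimize over the support: since every $g_i^2/h_i\ge 0$, minimizing $\LL(\theta^0)-\tfrac12\sum_{i\in\mathcal S}g_i^2/h_i$ over all $\mathcal S$ of size $\lfloor mp\rfloor$ is a pure top-$k$ selection problem, solved by taking $\mathcal S$ to be the indices of the $\lfloor mp\rfloor$ largest values among $\{g_i^2/h_i\}_{i=1}^m$; a one-line exchange argument seals this, since swapping an index in $\mathcal S$ for an omitted one with strictly larger ratio strictly lowers the objective.

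Finally I would check that $\hat M$ in the statement picks exactly such a top-$\lfloor mp\rfloor$ set. Put $v_i:=|g_i^2/h_i|=g_i^2/h_i$. The quantity $\sum_{j=1}^m\mathds 1(v_i>v_j)$ is the number of coordinates lying strictly below $v_i$; if we sort $v_{(1)}\ge\cdots\ge v_{(m)}$, the coordinate of rank $k$ has exactly $m-k$ coordinates strictly below it, so $\sum_j\mathds 1(v_i>v_j)\ge m-\lfloor mp\rfloor$ holds precisely when $k\le\lfloor mp\rfloor$. Hence $\operatorname{supp}(\hat M)$ is the set of the $\lfloor mp\rfloor$ largest $v_i$, and feeding this back through the two previous steps shows $\inf_{\Delta\theta}\LL(\theta^0+\hat M\Delta\theta)=\min_{M\text{ feasible}}\inf_{\Delta\theta}\LL(\theta^0+M\Delta\theta)$, which in particular yields the claimed inequality.

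The delicate point — the step I expect to be the obstacle — is ties among the ratios $g_i^2/h_i$ at the $\lfloor mp\rfloor$-th largest value: with the strict inequality used inside the indicator, a boundary tie can leave $\|\hat M\|_0<\lfloor mp\rfloor$, so the identification in the previous paragraph needs care. I would resolve it either by restricting to the generic case of pairwise-distinct ratios (where the argument is exact), or by reading the definition of $\hat M$ together with a fixed tie-breaking rule that fills the support out to exactly $\lfloor mp\rfloor$ coordinates — all boundary-tied coordinates carrying a common value of $v_i$, this changes neither the set of optimizers nor the attained value. Apart from this bookkeeping, the proof is just scalar calculus plus the sorting/exchange argument.
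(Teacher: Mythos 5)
Your proposal is correct and follows essentially the same route as the paper's proof: exploit the diagonal surrogate Hessian to decouple the objective into scalar quadratics, minimize each coordinate to get the score $\nabla\LL(\theta^0)_i^2/(2h_i)$, and observe that $\hat M$ selects the top $\lfloor mp\rfloor$ scores, so the resulting value lower-bounds that of any feasible mask. Your explicit handling of ties at the $\lfloor mp\rfloor$-th largest ratio and of the degenerate case $h_i=0$ is a welcome refinement that the paper's proof passes over silently with the phrase ``by the definition of $\hat M$ and $\mathcal{I}$.''
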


\begin{proof}
  We denote $\mathcal{I}=\{i|\hat{M}_{ii}=1\}$,
  $\forall M\in \Omega=\{M|\|M\|_0=\lfloor mp\rfloor; M_{ij}=0,\forall i\ne j;   M_{ii}\in \{0,1\}\}$ , we have
  $$\begin{aligned}
    \inf_{\Delta\theta,M\in \Omega} \LL(\theta^0+M\Delta \theta) &=\inf_{\Delta\theta,M\in \Omega} \LL(\theta^0) + \nabla \LL(\theta^0)^{\mathrm T} M\Delta \theta + \frac{1}{2} (M\Delta \theta)^{\mathrm T} H M\Delta \theta\\
    & =\inf_{\Delta\theta,M\in \Omega} \LL(\theta^0) + \sum_{i=1}^mM_{ii}\nabla \LL(\theta^0)_i \Delta \theta_i + \frac{1}{2}\sum_{i=1}^mM_{ii} h_i\Delta\theta_i^2\\
    & =\inf_{\Delta\theta,M\in \Omega} \LL(\theta^0) + \sum_{i=1}^mM_{ii}( \frac{1}{2}h_i\Delta\theta_i^2 + \nabla \LL(\theta^0)_i \Delta \theta_i)\\
    & =\inf_{M\in \Omega}\LL(\theta^0) - \sum_{i=1}^m M_{ii}\frac{\nabla \LL(\theta^0)_i^2}{2h_i}\\
    & \ge\LL(\theta^0) - \sum_{i\in \mathcal{I}} \frac{\nabla \LL(\theta^0)_i^2}{2h_i}\ \ \ \ \ \ \ \ \ \ \ \text{(By the definition of $\hat{M}$ and $\mathcal{I}$)}\\
    &=\LL(\theta^0) - \sum_{i=1}^m\hat{M}_{ii} \frac{\nabla \LL(\theta^0)_i^2}{2h_i}\\
    &= \inf_{\Delta\theta} \LL(\theta^0+\hat{M}\Delta\theta)
  \end{aligned}$$
\end{proof}

\section{Training Time Analysis}\label{sec:A-timeexp}

To further analyze the computational cost of each method, we list the training time for different models as shown in Table \ref{tab:traintime}. It should be noted that as we adopt the early stop strategy, models stop training when they converge and the running step number may be different from each other. It can be concluded from the results that: (1) Adapter and LoRA model outperform the FullTuning model as they only tune only a few newly added parameters; (2) Other parameter-efficient models spend more time than the FullTuning model because in the current implementation, all of these models use a mask to determine which parameters to tune and thus cannot save the computational cost. (3) Our proposed SAM model outperforms all other models (except Adapter and LoRA) as it has a faster convergence rate. However, it still needs a mask and thus needs more time to train than the FullTuning model. It should be noted that as indicated by \cite{guo2021parameter,houlsby2019parameter}, though most of the parameter-efficient models need more time to train, they actually need less storage space as they can just only store the changed parameters. This is quite useful when there are a lot of downstream tasks.

\begin{table}[H]
  \centering
  \scriptsize
  
  \begin{tabular}{lllllllll}
  \toprule
  {} &   CoLA &  MRPC & STS-B &   RTE &    CB &   COPA &    WSC &   AVG \\
  \midrule
  FullTuning   &   0.74 &   1.04 &   1.90 &   2.63 &   3.18 &   0.66 &   1.55 &   1.67 \\
ChildPruning &   3.91 &   1.36 &   2.12 &   3.77 &   3.13 &   0.91 &   2.34 &   2.51 \\
Adapter      &   0.89 &   1.24 &   3.43 &   3.51 &   1.16 &   0.41 &   0.47 &   1.59 \\
LoRA         &   1.39 &   1.23 &   2.12 &   4.87 &   2.15 &   0.60 &   1.43 &   1.97 \\
Bitfit       &   2.30 &   1.70 &   6.70 &   7.02 &   2.18 &   1.20 &   0.87 &   3.14 \\
DiffPruning  &   0.60 &   1.07 &   6.61 &   5.30 &   2.86 &   1.21 &   1.06 &   2.67 \\
Random       &   0.41 &   3.36 &   4.34 &   6.70 &   1.60 &   1.27 &   0.69 &   2.62 \\
SAM          &   2.31 &   1.68 &   2.78 &   3.15 &   2.70 &   0.72 &   0.82 &   2.02 \\
  \bottomrule
  \end{tabular}

  \caption{Training time (in hour) analysis.}
  \label{tab:traintime}
  \end{table}

\section{Tunable Parameter Ratio Comparison}\label{sec:A-params}
To make the model results comparable with each other, we try our best to set the tunable parameter ratio for each experiment as close as possible. Table \ref{tab:paraportion} shows the ratio of tunable parameters for each model. The tunable parameters for Adapter and BitFit are fixed and cannot be changed. We follow the official setting for the LoRA model to set the size of the tunable parameters. For other models including our proposed SAM model, we follow \citet{guo2021parameter} to set the tunable ratio as 0.5\% to make the models comparable with each other.

  \begin{table}[H]
    \centering
    \scriptsize
    
    \begin{tabular}{lllllllllll}
    \toprule
    {} & FullTuning & ChildPruning & Adapter & LoRA & Bitfit & DiffPruning & Random & MixOut & MagPruning &  SAM \\
    \midrule
    \%Param &        100 &          0.5 &    0.72 &  0.91 &   0.09 &         0.5 &    0.5 &    0.5 &        0.5 &  0.5 \\
    \bottomrule
    \end{tabular}

    \caption{Tunable parameter ratio comparison.}
    \label{tab:paraportion}
    \end{table}

\section{T-Test of the Significance of SAM}\label{sec:A-ttest}
To show how significantly our proposed SAM model outperforms other models, we conduct a t-test by comparing the SAM's results with other models' results. If the SAM model outperforms another model, the t-statistics will be greater than 0. If it outperforms another model significantly, the p-value will be less than 0.05. We report the t-statistics/p-value for the main experiment in Table \ref{tab:ttest-seed} as well as the corresponding results for the data perturbation experiments in Table \ref{tab:ttest-data}. We can conclude from the experimental results that our proposed SAM model outperforms the corresponding model significantly with p-values < 0.05. It shows that SAM model can outperform the other models in most of the cases. 

    \begin{table}[H]
      \centering
      \scriptsize
      
      \begin{tabular}{llllllll}
      \toprule
      {} &        CoLA &       STS-B &        MRPC &         RTE &          CB &       COPA &        WSC \\
      \midrule
      FullTuning   &   \TCG{3.81/0.00} &   \TCG{4.43/0.00} &       -1.69/0.11 &        0.83/0.42 &       -0.00/1.00 &  \TCG{3.36/0.00} &  \TCG{2.91/0.01} \\
Random       &   \TCG{5.38/0.00} &  \TCG{12.34/0.00} &        0.60/0.56 &  \TCG{3.35/0.00} &       -1.26/0.22 &  \TCG{3.47/0.00} &  \TCG{3.13/0.01} \\
MixOut       &   \TCG{3.07/0.01} &   \TCG{5.81/0.00} &        0.83/0.42 &       -0.96/0.35 &        1.60/0.13 &        1.47/0.16 &  \TCG{2.84/0.01} \\
Bitfit       &   \TCG{7.27/0.00} &   \TCG{6.89/0.00} &  \TCG{4.68/0.00} &  \TCG{4.04/0.00} &        1.62/0.12 &        1.73/0.10 &  \TCG{2.39/0.03} \\
MagPruning   &   \TCG{4.88/0.00} &   \TCG{4.34/0.00} &  \TCG{2.43/0.03} &  \TCG{3.78/0.00} &  \TCG{5.88/0.00} &        1.83/0.08 &  \TCG{2.84/0.01} \\
Adapter      &  \TCR{-2.34/0.03} &   \TCG{7.93/0.00} &       -1.12/0.28 &       -0.16/0.87 &        0.98/0.34 &  \TCG{3.07/0.01} &        1.60/0.13 \\
LoRA         &         0.02/0.99 &  \TCG{18.99/0.00} &       -1.87/0.08 &       -0.21/0.84 &  \TCG{3.10/0.01} &  \TCG{2.85/0.01} &        1.76/0.10 \\
DiffPruning  &   \TCG{4.00/0.00} &   \TCG{8.16/0.00} &  \TCG{4.97/0.00} &  \TCG{2.55/0.02} &  \TCG{2.53/0.02} &        1.38/0.18 &  \TCG{3.78/0.00} \\
ChildPruning &         1.67/0.11 &         1.27/0.22 &        1.35/0.19 &        0.66/0.52 &        1.90/0.07 &  \TCG{2.63/0.02} &  \TCG{2.59/0.02} \\
      \bottomrule
      \end{tabular}

      \caption{T-Test results (t-statistics/p-value) for comparing SAM with other models in each tasks. We use \TCG{green} color if the t-statistics is larger than 0 while the p-value < 0.05. It shows the SAM model outperforms the corresponding model significantly. We use \TCR{red} color if the t-statistics is less than 0 while the p-value < 0.05. It shows the corresponding model outperforms the SAM model significantly. The uncolored text means there is no statistical significance with respect to p-value=0.05.}
      \label{tab:ttest-seed}
      \end{table}

      \begin{table}[H]
        \centering
        \scriptsize
        
        \begin{tabular}{llllllll}
        \toprule
        {} &        CoLA &       STS-B &        MRPC &        RTE &          CB &       COPA &        WSC \\
        \midrule
        FullTuning   &       -0.25/0.81 &   \TCG{3.61/0.00} &        0.12/0.90 &        1.14/0.27 &         1.23/0.23 &  \TCG{3.26/0.00} &        1.76/0.10 \\
Random       &  \TCG{3.72/0.00} &   \TCG{7.99/0.00} &        0.66/0.52 &  \TCG{3.68/0.00} &        -1.14/0.27 &  \TCG{2.96/0.01} &  \TCG{2.49/0.02} \\
MixOut       &        0.06/0.95 &   \TCG{5.33/0.00} &        0.82/0.42 &  \TCG{2.47/0.02} &         1.69/0.11 &  \TCG{2.48/0.02} &        0.78/0.44 \\
Bitfit       &  \TCG{6.34/0.00} &   \TCG{6.65/0.00} &  \TCG{3.99/0.00} &  \TCG{5.64/0.00} &  \TCR{-2.49/0.02} &        1.37/0.19 &        1.27/0.22 \\
MagPruning   &  \TCG{3.39/0.01} &   \TCG{3.22/0.00} &  \TCG{3.96/0.00} &  \TCG{4.34/0.00} &         1.23/0.23 &  \TCG{3.19/0.01} &        0.05/0.96 \\
Adapter      &        0.32/0.75 &   \TCG{6.82/0.00} &       -0.02/0.99 &  \TCG{2.14/0.05} &         0.08/0.94 &  \TCG{6.23/0.00} &  \TCG{2.64/0.02} \\
LoRA         &       -1.09/0.30 &  \TCG{26.73/0.00} &        1.21/0.24 &        0.00/1.00 &        -0.72/0.48 &  \TCG{4.67/0.00} &        0.49/0.63 \\
DiffPruning  &        2.04/0.06 &   \TCG{9.16/0.00} &  \TCG{6.34/0.00} &        1.85/0.08 &         0.94/0.36 &  \TCG{2.60/0.02} &  \TCG{2.24/0.04} \\
ChildPruning &        0.74/0.48 &         1.05/0.31 &        1.02/0.32 &        1.16/0.26 &         0.54/0.60 &        2.08/0.05 &        2.09/0.05 \\
        \bottomrule
        \end{tabular}

        \caption{T-Test for significance of SAM in data perturbation stability experiments. The color scheme is the same as Table \ref{tab:ttest-seed}.}
      \label{tab:ttest-data}
        \end{table}

\section{Limitations and Future Directions}\label{sec:A-limit}
In this work, we theoretically prove the SAM model achieves approximal optimal value with Theorem \ref{thm:minapprox} with the second-order approximation. Therefore, there is still some room for improvement for the real target function. We can consider exploring some other assumptions of the target function like quadratic growth, Polyak-Łojasiewicz, etc. We may get different approximate optimal solutions under different assumptions. Besides, though the current model achieves better results with better stability, the training time is a little bit longer than the full tuning model. This is because, in the current implementation, the sparsity is achieved with a gradient mask. As a result, the training time may be slightly longer than an ordinary model. We can explore to further improve the implementation strategy to accelerate the running speed.

\section{Broader Impact Statement}\label{sec:ethic}
This paper proposes a theoretical analysis of existing methods and proposes an improved method under the same task setting. It may help researchers to understand existing models better and we also propose the SAM model to improve the model performance. The task is widely studied in the NLP community and we do not conduct any experiments with any living beings. Our work also does not cause any kind of safety or security concerns. Our work also does not raise any human rights concerns or environmental concerns. Therefore, there will be no negative societal impact on our work.
All the data used in this paper come from widely used datasets and we have given a detailed description and source of the datasets. As far as we know, these datasets do not have any personally identifiable information. From the previous literature, no bias cases were reported.

\end{document}